\newcolumntype{C}{>{\centering\arraybackslash}m{11em}}
\definecolor{codegreen}{rgb}{0,0.8,0}
\definecolor{codegray}{rgb}{0.5,0.5,0.5}
\definecolor{codepurple}{rgb}{0.58,0,0.12}
\definecolor{backcolour}{rgb}{0.95,0.95,0.92}
\lstdefinestyle{mystyle}{
    backgroundcolor=\color{backcolour},   
    commentstyle=\color{codegreen},
    keywordstyle=\color{magenta},
    numberstyle=\tiny\color{codegray},
    stringstyle=\color{codepurple},
    basicstyle=\ttfamily\footnotesize,
    breakatwhitespace=false,         
    breaklines=true,                 
    captionpos=b,                    
    keepspaces=true,                 
    numbers=left,                    
    numbersep=5pt,                  
    showspaces=false,                
    showstringspaces=false,
    showtabs=false,                  
    tabsize=2
}
\newtheorem{theorem}{Theorem}
\newtheorem{definition}{Definition}
\newtheorem{proposition}{Proposition}
\newtheorem{corollary}{Corollary}
\newcommand{\eg}{\textit{e.g. }}
\newcommand{\ie}{\textit{i.e. }}
\newcommand{\argmax}{\text{arg}\max}
\newcolumntype{P}[1]{>{\centering\arraybackslash}p{#1}}
\newcommand{\bibi}[1]{\todo[inline]{{\textbf{Bibi:} \emph{#1}}}}
\title{DeformRS: Certifying Input Deformations with Randomized Smoothing}
\author{
    % %Authors
    % % All authors must be in the same font size and format.
    % Written by AAAI Press Staff\textsuperscript{\rm 1}\thanks{With help from the AAAI Publications Committee.}\\
    % AAAI Style Contributions by Pater Patel Schneider,
    % Sunil Issar,\\
    % J. Scott Penberthy,
    % George Ferguson,
    % Hans Guesgen,
    % Francisco Cruz\equalcontrib,
    % Marc Pujol-Gonzalez\equalcontrib
}
\title{My Publication Title --- Single Author}
\author {
    Author Name
}
\title{DeformRS: Certifying Input Deformations with Randomized Smoothing}
\author {
    % Authors
    Motasem Alfarra \textsuperscript{\rm 1, *},
    Adel Bibi \textsuperscript{\rm 2, *},
    Naeemullah Khan \textsuperscript{\rm 2},
    Philip H.S. Torr \textsuperscript{\rm 2},
    Bernard Ghanem \textsuperscript{\rm 1}
}
\begin{document}

\maketitle

% \input{AAAI22/Sections/cover_letter}

% Uncomment the rest for paper with appendix.
% \maketitle

\begin{abstract}
Deep neural networks are vulnerable to input deformations in the form of vector fields of pixel displacements and to other parameterized geometric deformations \eg translations, rotations, etc. Current input deformation certification methods either (\textbf{i}) do not scale to deep networks on large input datasets, or (\textbf{ii}) can only certify a specific class of deformations, \eg only rotations. We reformulate certification in randomized smoothing setting for both general vector field and parameterized deformations and propose \textsc{DeformRS-VF} and \textsc{DeformRS-Par}, respectively. Our new formulation scales to large networks on large input datasets. For instance, \textsc{DeformRS-Par} certifies rich deformations, covering translations, rotations, scaling, affine deformations, and other visually aligned deformations  such as ones parameterized by Discrete-Cosine-Transform basis. Extensive experiments on MNIST, CIFAR10, and ImageNet show competitive performance of \textsc{DeformRS-Par} achieving a certified accuracy of $39\%$ against perturbed rotations in the set $[-10\degree,10\degree]$ on ImageNet. \footnote{Official Code: https://github.com/MotasemAlfarra/DeformRS.

$\,\,^*$Denotes equal contribution.}

\end{abstract}

\section{Introduction}

Deep Neural Networks (DNNs) are susceptible to small  additive input perturbations, \ie a DNN that correctly classifies $x$ can be fooled into misclassifying ($x+\delta$), even when $\delta$ is so small that $x$ and ($x+\delta$) are imperceptibly different \cite{szegedy2014, goodfellow2015adverserial}. Even worse, DNNs were shown to be vulnerable to input deformations \cite{alaifari2019adef} such as input rotations and scaling, where such deformations, unlike additive perturbations, can exist due to a slight change in the physical world. This raises a critical concern especially since DNNs are now deployed in safety critical applications, \eg self-driving cars. To address the nuisance of sensitivity to input deformations,  one would ideally seek to train DNNs that are \textit{certifiably} free from such adversaries. While there has been impressive progress towards this goal, \ie certifying input deformations, prior art suffers from the limitation of only being able to certify an individual set of deformations, \eg only rotations or only translations etc., or a small composition set of them \cite{singhRot,BalunovicCertifying,cvpr2002verification}. Only recently has a certification approach been developed for the richer class of smooth vector fields (general displacement of pixels) \cite{ruoss2021efficient}. However, all previous approaches require solving a mixed-integer or linear program, thus limiting their applicability to small DNNs on small datasets. On the contrary, the only certification methods that scale to larger networks on large datasets (\eg ImageNet) are based on randomized smoothing \cite{cohen2019certified}. However, such approaches \cite{fischer2020certified,li2020provable}, similar to many others, are limited to individual deformations, \eg only translations, or to deformations that ought to be \emph{resolvable} limiting the class of certifiable deformations.

\begin{figure*}
   \centering
\begin{tabular}{lll}
\includegraphics[width=0.85\textwidth]{./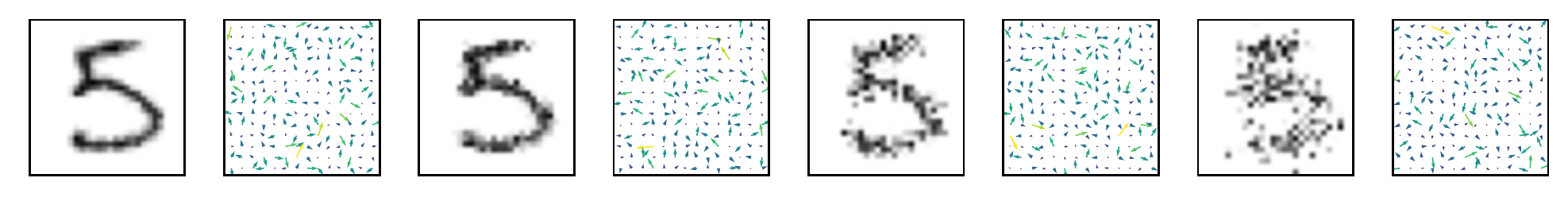} \vspace{-0.40cm}\\
\includegraphics[width=0.85\textwidth]{./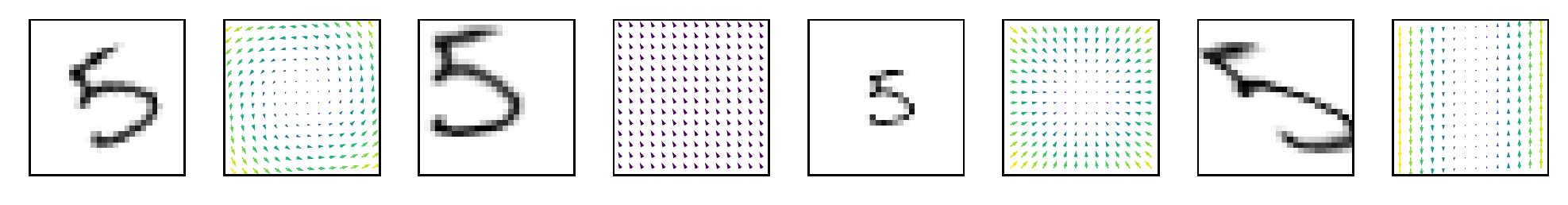}\vspace{-0.40cm} \\
 \includegraphics[width=0.85\textwidth]{./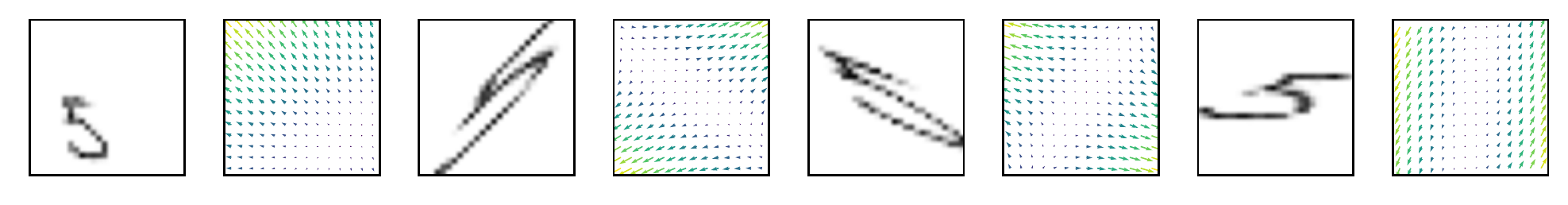} \vspace{-0.40cm}\\
\includegraphics[width=0.85\textwidth]{./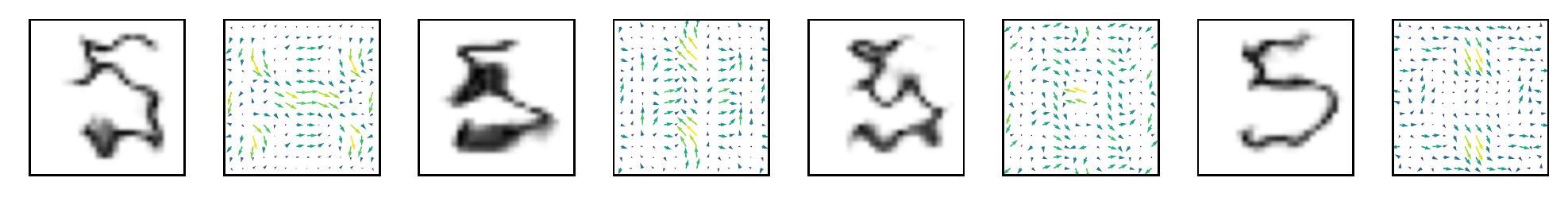} 
\end{tabular}
    \caption{\small \textbf{Examples of deformations.} We show examples of deformations accompanied with their respective vector fields. First row: Gaussian random deformations. Second row: rotation, translation, scaling, and sheering. Third row: affine deformations. Last row: DCT deformations. 
    }
    \label{fig:pull}
\end{figure*}

In this paper, we revisit the problem of certifying the parameterization of a general class of input deformations through randomized smoothing. Our approach, dubbed \textsc{DeformRS}, is general, and it allows for the certification of vector field \emph{and} parameterized deformations. For the class of parameterized deformations, \textsc{DeformRS} certifies general affine deformations that cover translation, rotations, scaling, sheering, etc., and any composition of them. Moreover, we show that if the parameterized deformation is represented by the low frequency components of the Discrete Cosine Transform (DCT), \textsc{DeformRS} allows for the certification of a set of visually aligned and plausible deformations. Figure \ref{fig:pull} presents several examples of the class of deformations \textsc{DeformRS} certifies at scale. Our contributions can be summarized as follows.
     \textbf{(i) \textsc{DeformRS-VF}.} We extend the formulation of randomized smoothing from pixel intensities to vector field deformations and derive a certification radius $R$ for the deformation vector field. That is to say, \textsc{DeformRS-VF} resists all deformations having a vector field with a norm that is smaller than $R$.
    \textbf{(ii) \textsc{DeformRS-Par.}} We specialize our analysis for parametrizable deformations and propose \textsc{DeformRS-Par}, which grants certification to popular deformations, \eg translation, rotation, scaling, and any composition subset of them, in addition to the general affine class of deformations. We also specialize \textsc{DeformRS-Par} for the set of deformations parameterized by the low-frequency components of DCT, thus certifying a richer class of visually aligned deformations that were not explored in earlier works.
    \textbf{(iii)} We demonstrate the effectiveness of our proposed approach by conducting extensive experiments on MNIST \cite{lecun1998mnist}, CIFAR10 \cite{Cifars}, and ImageNet \cite{imagenet}.
    \textsc{DeformRS-VF} is capable of providing networks that are certifiably robust against general input deformations.
    Moreover, \textsc{DeformRS-Par} achieves a certified accuracy of $96.8\%$, $91.8\%$ and $39\%$ against all rotations in the set $[-30\degree, 30\degree]$ for MNIST and $[-10\degree, 10\degree]$ on CIFAR10 and ImageNet, respectively. In comparison, a recent work \cite{cvpr2002verification} achieves a certified accuracy of $21.8\%$ on CIFAR10 under the same rotation perturbation set.

\section{Related Work}

\paragraph{Certifying Additive Perturbations.} Due to the vulnerability of DNNs to adversarial attacks \cite{szegedy2014, goodfellow2015explaining}, a stream of work was developed to build models that are certifiable against $\ell_p$ bounded additive adversaries. This includes methods based on Satisfiability Modulo Theory solvers \cite{ehlers2017formal,katz2017reluplex,bunel2017unified}, interval bound propagation \cite{ibp}, and semi-definite programming \cite{NEURIPS2018_29c0605a}, among many others \cite{ehlers2017formal, huang2017safety}. This class of approaches is generally computationally expensive for certifying deeper networks on large dimensional inputs \cite{tjeng2017evaluating} let alone for using them as part of a training routine \cite{weng2018towards}. Recently, randomized smoothing \cite{lecuyer2019certified, cohen2019certified} demonstrated to be an effective and scalable approach for probabilistic certification of additive perturbations. Followed by various improvements through incorporating adversarial training \cite{salman2019provably}, regularization \cite{zhai2020macer}, smoothing distribution optimization \cite{alfarra2020data}, randomized smoothing showed to achieve state-of-the-art performance in constructing highly accurate and certifiable networks.
% at the scale of ImageNet
Following the favorable properties of randomized smoothing, we leverage it for input deformation certification.

% \vspace{-0.30cm}
\paragraph{Certifying Image Deformations.} In addition to  additive input perturbations, DNNs were shown to be susceptible to input deformations. For instance, it was shown that DNNs can be fooled into mispredicting inputs undergoing small imperceptible vector field deformations (pixel displacements) \cite{alaifari2019adef}. This was followed by several works that aim to provide empirical evaluation of robustness against such deformations, \eg input translations and rotations, including attacks and defenses \cite{kanbak2017analysis, wassersteinattacksWong,engstrom2019exploring}. Unlike certification of additive input perturbations, certifying input deformations only recently started gaining attention. One of the earliest work performs an abstract interval bound propagation for certification \cite{singhRot}, which was later followed by a tighter linear program formulation \cite{BalunovicCertifying}, which certifies geometric transformations such as translation and rotation. Recently, several popular geometric transformations as well as other transformations, such as intensity contrast, were formulated as a piece-wise nonlinear layer \cite{cvpr2002verification}, thus allowing for exact certification based on a tighter formulation of classical $\ell_p$ certification solvers commonly used for additive perturbations. Moreover, recent work \cite{ruoss2021efficient} generated optimal intervals and certify them for general vector fields deformations. However, all previous methods either 
inherently 
suffer from scalability limitations, 
or that they cannot certify a composition of transformations jointly. Alleviating the scalability constraints, randomized smoothing was deployed to certify image transformations that are invariant to interpolation \cite{wassersteinsmoothing}; however, the proposed formulation was restricted to individual transformations like rotation and translation. This was followed by the work of \cite{li2020provable}, where networks were verified against individual resolvable transformations by estimating their Lipschitz upper bound. We extend prior art to allow for scalable certification of vector field deformations.

\section{Certifying Deformations with Randomized Smoothing}

\textbf{Background.} Randomized smoothing constructs a provably robust classifier $g: \mathbb R^n \rightarrow \mathcal P(\mathcal Y)$ from any classifier $f: \mathbb R^n \rightarrow \mathcal P(\mathcal Y)$, where $\mathcal{P}(\mathcal{Y})$ is a probability simplex over the set of labels $\mathcal{Y}$. % and following \cite{zhai2020macer},
For some distribution $\mathcal{D}$, $g$ is defined as:
\[
g(x) = \mathbb E_{\epsilon\sim\mathcal D} \left[f(x+\epsilon)\right].
\]
Suppose that $g$ assigns the class $c_A$ for an input $x$, we define:
\[
p_A = {g}^{c_A}(x,p) \quad\text{ and }\quad p_B = \max_{i \neq c_A} {g}^{i}(x,p),
\]
where $g^i(x)$ is the $i^{\text{th}}$ element of $g(x)$. Then, for Gaussian smoothing, \ie $\mathcal D = \mathcal N(0, \sigma^2 I)$, $g$ outputs a fixed prediction, \ie $ g(x) = g(x+\delta)$, for any perturbation $\delta$ satisfying $\|\delta\|_2 \leq \frac{\sigma}{2}(\Phi^{-1}(p_A) - \Phi^{-1}(p_B))$ \cite{zhai2020macer}. Here, $\Phi^{-1}$ is the inverse CDF of the standard Gaussian. Moreover, for uniform smoothing, \ie $\mathcal D = \mathcal U[-\lambda, \lambda]^n$, then $g(x) = g(x+\delta)$ for any perturbation $\delta$ satisfying $\|\delta\|_1 \leq \lambda (p_A - p_B)$ \cite{yang2020randomized}. While there has been tremendous progress in robustifying networks against $\ell_p$ additive attacks, there has been far less progress towards robustness against non-additive perturbations (\eg shadowing, input deformations, etc.).

\paragraph{Threat Model.}
We focus on the rich class of spatial deformations, \ie perturbations to the pixel coordinates, which cover as a special case translation, rotation, scaling, sheering, etc. Given an input $x$ and a function parametrized by $\kappa$ that transforms $x$ into $x'$, the threat model aims at finding parameters $\kappa$ that causes $f$ to mispredict $x'$. Formally, as proposed earlier \cite{cvpr2002verification}, the threat model solves:

\begin{equation}\label{eq:threat_model}
\begin{aligned}
\min_{\kappa} \Big(f^{c_A}(x') - \max_c f^{c \neq c_A}(x') \Big) < 0, ~~ \text{s.t.}\quad d_\kappa(x, x') < \rho,
\end{aligned}
\end{equation}

\noindent where $d_\kappa(x, x')$ measures the distance in the parameter space $\kappa$ (\eg rotation angle). In this setup, the threat model can only access the parameters of the transformation function for a given input $x$. This important formulation is studied earlier in the literature as it reformulates adversarial attacks to simulators and face recognition systems (\eg attacking pose of a face) \cite{wu2020physical, Hamdi_Mueller_Ghanem_2020}. Here, we leverage randomized smoothing to provide simple general scalable certificates against this threat model.

\begin{comment}
We assume without loss of generalrity that images are square where domain is defined on the pixel grid as $\Omega = \{0,1,2,\dots,L\} \times \{0,1,2,\dots,L\}$, such that the total number of pixels is $n = L^2$. Let $  z \in \Omega^{n}$ where $  z_i = [u, v] \in \Omega$ and that $  x : \Omega^n \rightarrow [0,1]^n$ where $i^{\text{th}}$ coordinate of $  x(  z)$ is given as $  x_i(  z) = x(  z_i) = x([u, v])$. At last, for a general domain transformation $\psi \in \mathbb{R}^{2n}$ we introduce a notion of an interpolation function defined as $ I(  x,  z + \psi)$. That is to say $I : [0,1]^n \times \mathbb{R}^{2n} \rightarrow [0,1]^n$ where $I_i(  x,  z) = I(  x,  z_i)$. 
\end{comment}

\subsection{\textsc{DeformRS-VF}: Certifying Vector Fields}

\paragraph{Deformations.} Let the discrete grid $\Omega^{\mathbb{Z}} \subset \mathbb{Z}^2$, where $\mathbb{Z}$ is the set of integers, represent the domain of images $I:\Omega^{\mathbb{Z}} \rightarrow [0,1]^{c}$, where $c$ is the number of channels in the image. Then, a domain deformation  is defined as $T:\Omega^{\mathbb{Z}} \rightarrow \mathbb{R}^2$, such that for a pixel coordinate $p \in \Omega^{\mathbb{Z}}$, we can write $T(p)=p+v(p)$, where $v: \Omega^{\mathbb{Z}} \rightarrow \mathbb{R}^2$ represents the vector field. Since the deformation $T$ maps pixel coordinates to $\mathbb{R}^2$, one needs to define an associated interpolation function for a deformed image $x \in [0,1]^n$, where $n = c \times |\Omega^{\mathbb{Z}}|$, as $I_T : [0,1]^n \times \mathbb{R}^{2|\Omega^\mathbb{Z}|} \rightarrow [0,1]^n$. As such, when $T(p) = p ~~\forall p \in \Omega^{\mathbb{Z}}$, then we have $I_T(x,T(p)) = x$. For ease of notation, we use $p$ to denote the complete set of the discrete grid $\Omega^\mathbb{Z}$. First, we extend the definition of smoothed classifiers to domain deformation smoothed classifiers.

\begin{definition}\label{def:smooth_domain_classifier}
Given a classifier $f : \mathbb R^n \rightarrow \mathcal P(\mathcal Y)$ and an interpolation function $I_T : [0,1]^n \times \mathbb{R}^{2|\Omega^\mathbb{Z}|} \rightarrow [0,1]^n$, we define a deformation smoothed classifier as:
\begin{align*}
    \hat{g}(x,p) = \mathbb{E}_{\epsilon \sim \mathcal{D}}
    \left[f\left(I_T(x, p + \epsilon)\right)\right].
\end{align*}
\end{definition}

Note that contrary to $g$, which smooths the predictions of $f$ under additive pixel perturbations, $\hat g$ smooths predictions of $f$ under pixel coordinate deformations. Similar in spirit to earlier results on randomized smooth for additive perturbations \cite{cohen2019certified, zhai2020macer}, we can show that $\hat{g}$ is certifiable as per the following Theorem. We leave all proofs to the \textbf{Appendix}.

\begin{theorem}
\label{theo:vector_field_certification}
Suppose that $\hat g$ assigns the class $c_A$ for an input $x$, \ie $c_A = \argmax_c \hat{g}^c(x,p)$ with:
\[
p_A = \hat{g}^{c_A}(x,p) \quad\text{ and }\quad p_B = \max_{i \neq c_A} \hat{g}^{i}(x,p)
\]
then $\argmax_c \hat g^c(x, p+\psi) = c_A$ for vector field perturbations satisfying:
\begin{equation}
\begin{aligned}\label{eq:vector_field_certification_inequality}
\begin{split}
    &\|\psi\|_1 \leq \lambda \left(p_A - p_B\right) \qquad \qquad \qquad \quad \,\,\,\, \text{for } \mathcal D = \mathcal U[-\lambda, \lambda],  \\
    &\|\psi\|_2 \leq \frac{\sigma}{2}\left(\Phi^{-1}(p_A) - \Phi^{-1}(p_B)\right) \qquad \text{for } \mathcal D = \mathcal N(0, \sigma^2I),
\end{split}
\end{aligned}
\end{equation}
\end{theorem}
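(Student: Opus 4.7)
The plan is to reduce Theorem~1 to the standard randomized smoothing certificates that are already stated in the Background paragraph, by recasting the deformation-smoothed classifier $\hat{g}$ as a standard additive-noise smoothed classifier operating on the coordinate array $p$ rather than on pixel intensities $x$.

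First, I would fix $x$ and introduce an auxiliary function $h : \mathbb{R}^{2|\Omega^{\mathbb{Z}}|} \to \mathcal{P}(\mathcal{Y})$ defined by $h(q) = f(I_T(x, q))$, treating the continuous coordinate array $q$ as the new input variable. With this definition, Definition~1 rewrites as
\[
\hat{g}(x, p) = \mathbb{E}_{\epsilon \sim \mathcal{D}}\bigl[h(p+\epsilon)\bigr], \qquad \hat{g}(x, p+\psi) = \mathbb{E}_{\epsilon \sim \mathcal{D}}\bigl[h(p+\psi+\epsilon)\bigr].
\]
This is precisely the classical randomized smoothing setup applied to the base classifier $h$, with the role of the "clean input" played by $p$, and the role of the "additive adversarial perturbation" played by the vector field $\psi$. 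Importantly, both $p$ and $\psi$ live in the same ambient space $\mathbb{R}^{2|\Omega^{\mathbb{Z}}|}$, and the smoothing noise $\epsilon$ is drawn from an isotropic Gaussian or a uniform product distribution on exactly that space.

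Second, I would invoke the two certificates quoted in the Background verbatim, applied to the function $h$ at the input $p$. For $\mathcal{D} = \mathcal{N}(0,\sigma^2 I)$ on $\mathbb{R}^{2|\Omega^{\mathbb{Z}}|}$, the Cohen--Zhai certificate yields that the top class of the smoothed $h$ is preserved against any perturbation $\psi$ with $\|\psi\|_2 \leq \tfrac{\sigma}{2}(\Phi^{-1}(p_A) - \Phi^{-1}(p_B))$; and for $\mathcal{D} = \mathcal{U}[-\lambda,\lambda]$, the Yang et al.\ certificate yields preservation whenever $\|\psi\|_1 \leq \lambda(p_A - p_B)$. Since $p_A, p_B$ are defined with respect to $\hat{g}$, which by the construction above is literally the smoothed version of $h$, these bounds transfer to $\hat{g}$ without modification, establishing the two inequalities in \eqref{eq:vector_field_certification_inequality}.

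The "hard part'' is really just the conceptual reframing in the first step — once $h$ is introduced, the rest of the argument is a one-line appeal to prior work. In particular, no smoothness or Lipschitzness assumption on $f$ or on the interpolation $I_T$ is needed, because the standard randomized smoothing guarantees hold for arbitrary measurable base classifiers; the only subtlety worth highlighting explicitly in the writeup is the bookkeeping of dimensions, i.e.\ that $\epsilon$ and $\psi$ must be matched to the coordinate space $\mathbb{R}^{2|\Omega^{\mathbb{Z}}|}$ rather than to the pixel-intensity space $\mathbb{R}^n$.
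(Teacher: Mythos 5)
Your proposal is correct and is essentially the paper's own argument: the paper likewise treats $\hat g$ as a standard additive-noise smoothed classifier in the coordinate variable $p$, and obtains the two radii by showing that $\hat g$ (uniform case) and $\Phi^{-1}(\hat g)$ (Gaussian case) are $\nicefrac{1}{2\lambda}$- and $\nicefrac{1}{\sigma}$-Lipschitz in $p$, respectively, and then applying a generic Lipschitz-classifier certification lemma. The only difference is that the paper re-derives these Lipschitz bounds (i.e., the certificates of Zhai et al.\ and Yang et al.) from scratch for self-containedness, whereas you invoke them as black boxes applied to $h(q)=f(I_T(x,q))$ --- which is legitimate, since those guarantees hold for arbitrary measurable base classifiers on the coordinate space.
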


Theorem \ref{theo:vector_field_certification} states that as long as the $\ell_1$ and $\ell_2$ norms of the deformation characterized by the vector field $\psi$ are sufficiently small, then $\hat g$ enjoys a constant prediction. Note that the $\ell_1$ and $\ell_2$ certificates are agnostic to the structure of the deformation vector field $\psi$. That is to say, $\hat g$ resists all domain deformations, \eg translation, rotation, scaling, etc., as long as \eqref{eq:vector_field_certification_inequality} is satisfied. This includes patch level deformations, \ie when $\psi$ is an all zero vector field except for a set of indices representing a patch (\eg a rotation of a patch in the image).

\subsection{\textsc{DeformRS-Par:} Certifying Parametrizable Deformations}

Note that the dimensionality of the deformation vector field $\psi$ is twice (two dimensions of the image) the number of pixel coordinates, \ie $2 |\Omega^\mathbb{Z}|$, where $|\Omega^\mathbb{Z}|= 32 \times 32$ in CIFAR10. As such, the set of deformation vector fields $\psi$ of this large dimensionality satisfying the conditions in \eqref{eq:vector_field_certification_inequality} might be limited to a set of imperceptible deformations, \ie $x$ and $I_T(x,p+\psi)$ are indistinguishable
\footnote{Certifying imperceptible deformations is important since adversaries can take this form \cite{alaifari2019adef}.}.
However, many popular deformations are parameterized by a much smaller set of parameters. In general, consider the deformation $T_\phi(p) = p + v_\phi(p)$, where the dimension of $\phi$ is much lower than $v_\phi(p)$, where $v_\phi$ is an element wise function. For example, when the vector field $v_\phi$ characterizes a translation or a rotation, the parameterization $\phi$ is of dimensions 2 and 1, respectively. In that regard, we show that a close relative to Theorem \ref{theo:vector_field_certification} also holds for perturbations in the parameters characterizing deformations. We first define a parametric deformation smoothed classifier.

\begin{definition}\label{def:smooth_parametric_classifier}
Given a classifier $f:\mathbb{R}^n \rightarrow \mathcal{P}(\mathcal{Y})$ and an interpolation function $I_T : [0,1]^n \times \mathbb{R}^{2|\Omega^\mathbb{Z}|} \rightarrow [0,1]^n$, we define a parametric deformation smoothed classifier as follows:
\begin{align*}
    \tilde{g}_\phi(x, p) = \mathbb{E}_{\epsilon \sim \mathcal D}
    \left[f\left(I_T\left(x,  p + v_{\phi+\epsilon}(p)\right)\right)\right] .
\end{align*}
\end{definition}

Unlike Definition \ref{def:smooth_domain_classifier}, $\tilde{g}_\phi$ smooths the prediction of $f$ under a specific class of deformations by perturbing the parameterization $\phi$. Next, we analyze the robustness of $\tilde{g}_\phi$. 

\begin{corollary}\label{cor:parametric_certification}
Suppose that $\tilde g$ assigns the class $c_A$ for an input $x$, \ie $c_A = \text{arg} \max_c \tilde{g}_\phi(x,p)$ with:
\[
p_A = \tilde{g}_\phi^{c_A}(x,p) \quad\text{ and }\quad p_B = \max_{i \neq c_A} \tilde{g}_\phi^{i}(x,p),
\]
then $\text{arg}\max_c \tilde g_{\phi+\xi}(x, p) = c_A$ for all parametric domain perturbations satisfying:
\begin{align*}
\begin{split}
    &\|\xi\|_1 \leq \lambda \left(p_A - p_B\right) \qquad \qquad \qquad \quad \,\,\,\, \text{for } \mathcal D = \mathcal{U}[-\lambda, \lambda], \\
    &\|\xi\|_2 \leq \frac{\sigma}{2}\left(\Phi^{-1}(p_A) - \Phi^{-1}(p_B)\right) \qquad \text{for } \mathcal D = \mathcal N(0, \sigma^2I).
\end{split}
\end{align*}

\end{corollary}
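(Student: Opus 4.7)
The plan is to reduce Corollary 1 to a single invocation of the randomized smoothing certificates that already underlie Theorem 1, now applied in the low-dimensional parameter space of $\phi$ rather than the high-dimensional pixel-coordinate space.

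First, holding the image $x$ and the pixel grid $p$ fixed, I would define an auxiliary base classifier $h : \mathbb{R}^k \rightarrow \mathcal P(\mathcal Y)$, where $k$ is the dimension of $\phi$, by
\[
h(\phi) = f\bigl(I_T(x, p + v_\phi(p))\bigr).
\]
With this definition, the parametric deformation smoothed classifier of Definition 2 can be rewritten as $\tilde g_\phi(x,p) = \mathbb E_{\epsilon \sim \mathcal D}[h(\phi + \epsilon)]$, which is exactly the standard randomized smoothing of $h$ evaluated at the point $\phi$ under the smoothing distribution $\mathcal D$.

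Second, I would invoke the Neyman--Pearson based certificates that underlie Theorem 1 itself, namely the $\ell_2$ bound for $\mathcal D = \mathcal N(0,\sigma^2 I)$ and the $\ell_1$ bound for $\mathcal D = \mathcal U[-\lambda,\lambda]$, now applied to $h$ rather than to the pixel-coordinate-perturbed classifier used for $\hat g$. By construction, the quantities $p_A$ and $p_B$ in the hypothesis of the corollary are exactly the top-class and runner-up probabilities of the smoothing of $h$ at $\phi$. Thus the certificates state that replacing $\phi$ by $\phi + \xi$ cannot alter the top prediction whenever $\|\xi\|_2 \leq \frac{\sigma}{2}(\Phi^{-1}(p_A) - \Phi^{-1}(p_B))$ in the Gaussian case, or $\|\xi\|_1 \leq \lambda(p_A - p_B)$ in the uniform case, which is exactly the conclusion sought.

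The only step that requires any care is confirming that $h$ satisfies the hypotheses of the standard randomized smoothing theorems, but this is immediate: $h$ is a deterministic composition of the fixed classifier $f$, the interpolation operator $I_T$, and the parameterization $v_\phi$, so it is a well-defined measurable map from $\mathbb{R}^k$ to $\mathcal P(\mathcal Y)$ and the certificates apply verbatim. I expect no genuine obstacle here: Corollary 1 really is a corollary in the strongest sense, since randomized smoothing is blind to whatever sits downstream of the additive noise, and the reduction above simply moves that noise from the vector field $\psi$ of Theorem 1 to its low-dimensional parametrization $\phi$.
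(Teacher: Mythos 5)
Your proposal is correct and matches the paper's own argument: the paper proves the corollary by observing it is ``identical to the proof of Theorem~\ref{theo:vector_field_certification} but with replacing $\hat g$ with $\tilde g$, and $p$ with $\phi$,'' which is precisely your reduction of smoothing an auxiliary base classifier $h(\phi) = f(I_T(x, p + v_\phi(p)))$ in parameter space. The only cosmetic difference is that you cite Neyman--Pearson certificates while the paper derives the same radii via Lipschitz bounds on the smoothed classifier (and on $\Phi^{-1}$ of it), but both routes yield identical certificates and the essential observation---that the smoothing machinery is agnostic to what sits downstream of the noise---is the same.
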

Corollary \ref{cor:parametric_certification} specializes the result of Theorem \ref{theo:vector_field_certification} to the family of parametric deformations. It states that as long as the norm of the perturbations to the deformation parameters is sufficiently small, $\tilde{g}_\phi$ enjoys a constant prediction. Next, we show the parametrization of several popular deformations. Let the pixel grid $\Omega^{\mathbb{Z}}$ be the grid of an image of size $N\times M$, where $p_{n,m} = (n,m)\in \Omega^{\mathbb{Z}}$ is a pixel location and  $v_\phi(p_{n,m}) = (u_{n,m}, v_{n,m})$ represents the  field at $p_{n,m}$.

\paragraph{Translation.} Image translation is only parameterized by two parameters $\phi = \{t_v,\, t_v\}$, namely $v_\phi(p_{n,m}) = (t_u, t_v) ~~\forall p$ $~\forall n,m$ as per Definition \ref{def:smooth_parametric_classifier} and Corollary \ref{cor:parametric_certification}. 

\paragraph{Rotation.} 2D rotation is only parameterized by the rotation angle $\phi = \{\theta\}$, where $u_{n,m} = n (cos(\theta)-1) - m sin(\theta)$ and $v_{n,m} = n sin(\theta) + m (cos(\theta)-1)$.

\paragraph{Scaling.} Similar to rotation, scaling is parametrized with one parameter; the scaling factor $\phi = \{\alpha\}$, where $u_{n,m} = (\alpha-1)n$ and $v_{n,m} = (\alpha-1)m$ $\forall n,m$. That is to say, the vector field has the form $v_\alpha(p) = ((\alpha-1)n,(\alpha-1)m)~\forall p$.

\paragraph{Affine.} Our formulation for the certification of the parametric family of deformations is general and covers all affine vector fields as special cases. In particular, affine vector fields are parameterized by 6 parameters, namely $\phi = \{a,b,c,d,e,f\}$, where $u_{n,m} = an + bm + e$ and $v_{n,m} = cn + dm + f$. Note that this class naturally covers composite deformations, such as scaling and translation jointly.

\paragraph{Beyond Affine: DCT- Basis.} To address deformations beyond affine vector fields, we also consider certifying a class of deformations represented by the Discrete Cosine Transform (DCT) basis. In particular, we consider the low-frequency component truncated DCT of the vector field $u_{n,m}$ and $v_{n,m}$ with a window size of $k \times k$ 
(as opposed to the complete size of $N \times M$), where the set is characterized by $2k^2$ 
parameters. 

\section{Experiments}

We validate the certified performance of \textsc{DeformRS} following Theorem \ref{theo:vector_field_certification} and Corollary \ref{cor:parametric_certification}, respectively. The goal of this section is to show that (\textbf{i}) \textsc{DeformRS-Par} improves certified accuracy against individual deformations that are parameterizable, \eg rotation as compared to \cite{cvpr2002verification} (MOH), in addition to comparisons against several other individual deformations on several datasets. (\textbf{ii}) \textsc{DeformRS-Par} can certify the general class of affine deformations allowing for the certification of a composition of deformations, \eg rotation and sheering jointly. (\textbf{iii}) \textsc{DeformRS-Par} can certify deformations that are parameterized by truncated DCT coefficients, a more general class of deformations that can represent visually aligned deformations. (\textbf{iv}) Following Theorem \ref{theo:vector_field_certification}, \textsc{DeformRS-VF} certifies general vector field deformations that are generally imperceptible.
Here, we note that while our work directly compares to MOH in terms of setup and formulation (certifying parameter perturbations as per the threat model in Objective \ref{eq:threat_model}), we include other geometric certification approaches, \ie \cite{li2020provable} (LI),\cite{ BalunovicCertifying} (BAL), and \cite{fischer2020statistical} (FBV), that are not directly comparable to ours due to a different threat model for full completeness.

\begin{table*}[t]
\centering
\caption{\small\textbf{Certifying  individual deformations on MNIST and CIFAR10.} We compare the certified accuracy of \textsc{DeformRS-Par} against (R)otation, (S)caling and (T)ranslation with that of prior art. We define $\|\psi\|_2 = (t_u^2 + t_v^2)^{\nicefrac{1}{2}}$ for translation. \textbf{(a)} and \textbf{(d)}: Certified accuracy at T$(\|\psi\|_2 \leq 2)$ and T$(\|\psi\|_2 \leq 2.41)$, respectively; we adopt these settings from BAL and FBV. \textbf{(b)} and \textbf{(c)}: Certified accuracy at R$(6.79 \degree)$ and R$(38.24 \degree)$, respectively; we adopt these settings from FBV. Note that \textsc{DeformRS-Par} achieves a higher certified accuracy than \textbf{(a, b, d)} at a higher radius. Best certified accuracies are highlighted in \textbf{bold}.}
\centering
\resizebox{0.99\textwidth}{!}{
\scriptsize
\begin{tabular}{ c | ccc | ccc }
\toprule 
\multirow{2}{*}{Certification} &\multicolumn{3}{c|}{MNIST} & \multicolumn{3}{c}{CIFAR-10} \\
  & R$(30\degree)$ & S$(20\%)$ & T$(\|\psi\|_2 \leq 5)$& R$(10\degree)$ & S$(20\%)$ & T$(\|\psi\|_2 \leq 5)$\\
\midrule
  (BAL,MOH)    &87.80(BAL)  & - & 77.00$^\text{\textbf{(a)}}$(BAL) &    87.80 (BAL), 21.80 (MOH)&   -    &-  \\
 (FBV)& 72.75$^\textbf{(c)}$ & - & 95.00$^\textbf{(d)}$ & 42.00$^\textbf{(b)}$ & - & -  \\
  (LI)  & 95.60 & 96.80 & 96.80 & 63.80 & 58.40  & 84.80 \\
 \textsc{DeformRS-Par}   & \textbf{96.85}, \textbf{96.10}$^{\textbf{(c)}}$ & \textbf{98.70} & \textbf{99.20} & \textbf{91.82}     & \textbf{90.30}      & \textbf{88.80}  \\
\bottomrule
\end{tabular}\label{tb:MNIST-CIFAR10}}
\end{table*}

\begin{table}[t]
\small
\caption{\small\textbf{Certifying individual deformations on ImageNet.} We compare the certified accuracy of \textsc{DeformRS-Par} against (R)otation, (S)caling and (T)ranslation with prior art. \textbf{(e)}: Certified accuracy at R$(1.86 \degree)$; we adopt this setting from FBV.}
\centering
\begin{tabular}{ c | ccc }
\toprule 
% \hline
\multirow{2}{*}{Certification} & \multicolumn{3}{c}{ImageNet} \\
  & R$(10\degree)$ & S $(15\%)$ & T$(\|\psi\|_2 \leq 5)$\\
\midrule
  (FBV)   &   17.25$^{\textbf{(e)}}$    &     - &  -\\
  (LI)  &    33.00   &    31.00   & \textbf{63.30} \\
 \textsc{DeformRS-Par}   &  \textbf{39.00}    & \textbf{42.80}      & 48.20\\
\bottomrule
\end{tabular}\label{tb:ImageNet}
\end{table}

\textbf{Setup.} We follow standard practices prior art, \eg LI and FBV,  and conduct experiments on MNIST \cite{lecun1998mnist}, CIFAR10 \cite{Cifars}, and ImageNet \cite{imagenet} datasets. For experiments on MNIST and CIFAR10, we certify ResNet18 \cite{resnet} trained for 90 epochs with a learning rate of 0.1, momentum of 0.9, weight decay of $10^{-4}$, and learning rate decay at epochs 30 and 60 by a factor of 0.1. For ImageNet experiments, we certify a fine-tuned pretrained ResNet50 for 30 epochs using SGD with a learning rate of $10^{-3}$ that decays at every 10 epochs by a factor of 0.1. All networks are trained with data augmentation sampled from the respective deformations that are being certified, so as to attain a highly accurate base classifier $f$ under such deformations. Following randomized smoothing methods \cite{salman2019provably,zhai2020macer,alfarra2020data} and using publicly available code \cite{cohen2019certified}, all our results are certified with $100$ Monte Carlo samples for the selection of the top prediction $c_A$ and $100,000$ samples for the estimation of a lower bound to the prediction probability $p_A$ with a failure probability of $0.001$. Throughout all experiments, we choose $I_T$ to be a bi-linear interpolation function. Moreover, since image dimensions vary across datasets (square images of sizes 28, 32, 224 for MNIST, CIFAR10 and ImageNet, respectively), we normalize all image dimensions to $[-1, 1]\times[-1, 1]$. While our certificate has a probabilistic nature, we compare against both mixed integer and linear program based certification methods (BAL, MOH), as well as randomized smoothing based approaches (FBV, LI) for comprehension.

\textbf{Evaluation metrics.} Following prior art (FBV, LI), we use certified accuracy to compare networks. The certified accuracy at a radius $R$ is the percentage of the test set that is both correctly classified  and has a certification radius of at least $R$. Note that $R$ is computed following Corollary \ref{cor:parametric_certification} for \textsc{DeformRS-Par} and Theorem \ref{theo:vector_field_certification} for \textsc{DeformRS-VF}. 
We report the Average Certified Radius (ACR)  \cite{zhai2020macer}.

\subsection[xx]{\textsc{DeformRS-Par} - Paramterizable Deformations
\footnote{Certifying deformations lack standard benchmarks and evaluation protocols. This is why there are several superscripts in Tables \ref{tb:MNIST-CIFAR10} and \ref{tb:ImageNet} as methods report certified accuracies at different radii.}}

\textbf{Rotation.} Rotation deformations are parameterized with a bounded scalar representing the rotation angle $\theta\in [-\pi, \pi]$. Therefore, we use the Uniform smoothing variant of Corollary \ref{cor:parametric_certification} resulting in a certification of the form $|\theta| \leq \lambda (p_A -p _B)$. We train several networks with $\lambda \in \{\nicefrac{\pi}{10}, \nicefrac{2\pi}{10}, \dots, \pi\}$, where each trained network is certified with the corresponding $\lambda$ used in training.
We compare the rotation certified accuracy of \textsc{DeformRS-Par} against that of prior work (BAL, FBV, LI, and MOH) on MNIST and CIFAR10 in Table \ref{tb:MNIST-CIFAR10} and on ImageNet in Table \ref{tb:ImageNet}. Following the common practice in randomized smoothing literature \cite{salman2019provably, zhai2020macer}, Tables \ref{tb:MNIST-CIFAR10} and \ref{tb:ImageNet} report the best certified accuracies for \textsc{DeformRS-Par} cross-validated over $\lambda$. 

In particular, and as shown in Table \ref{tb:MNIST-CIFAR10}, \textsc{DeformRS-Par} outperforms its best competitor by $1.25\%$ and $4\%$ on MNIST and CIFAR-10 at rotation radii of $30\degree$ (\ie R$(30\degree)$) and $10\degree$ (\ie R$(10\degree)$), respectively.  Interestingly, on CIFAR10, the certified accuracy of \textsc{DeformRS-Par} at radius $ 10\degree$ is even better than the accuracy of FBV reported at the smaller angle radius of $6.79\degree$. The improvement is consistent on ImageNet, where \textsc{DeformRS-Par} outperforms the randomized smoothing based approach of LI  by $6\%$, as reported in Table \ref{tb:ImageNet}. Further, we report an improvement of 70\% on the certified accuracy on CIFAR10 at radius $10\degree$ against MOH that shares the same threat model to our formulation. We believe that \textsc{DeformRS-Par} outperforms mixed-integer and linear program rotation certification methods due to their high computational cost that  results in prohibitive explicit training for improved certification (BAL, MOH). We plot in the first column of Figure \ref{fig:LowDimTransformations_MNISTCIFARImagenet} the certified accuracy of \textsc{DeformRS-Par} over a subset $\lambda$ used for training and certification. We leave the rest of the ablations of $\lambda$ to the \textbf{Appendix}. We observe that the certified accuracies of \textsc{DeformRS-Par} at the radii reported in the previous tables are indeed insensitive to the choice of $\lambda$. Moreover, we note that \textsc{DeformRS-Par} attains a certified accuracy of at least 80\% on both MNIST and CIFAR10 at a radius of $100\degree$. In addition, when $\lambda = 90\degree$ on MNIST, \textsc{DeformRS-Par} attains an ACR of $85\degree$, \ie the average certified rotation is $85\degree$. 

\textbf{Scaling.} Scaling deformations are parameterized by $\alpha \ge 0$. Note that a scaling $\alpha$ can either be a zoom-out $(\alpha > 1)$ or a zoom-in $(0 < \alpha < 1)$. For ease, we consider the bounded scaling factor $\alpha - 1$ instead such that $|\alpha - 1| < 0.7$. Thus, an appropriate smoothing distribution in Corollary \ref{cor:parametric_certification} is uniform with $\lambda \in \{0.1, 0.2, \dots, 0.7\}$ granting a certificate of the form $|\alpha - 1| \leq \lambda(p_A-p_B)$. We report the certified accuracy at the scale factor of $20\%$  $(\ie 0.8 \leq \alpha \leq 1.20)$ in Table \ref{tb:MNIST-CIFAR10} for MNIST and CIFAR10, and at a scale factor of 15\% $(\ie 0.85 \leq \alpha \leq 1.15)$ for ImageNet in Table \ref{tb:ImageNet}. The best certified accuracy cross validated over $\lambda$ for \textsc{DeformRS-Par} outperforms its best competitor (LI) by 1.9\% on MNIST, 31.9\% on CIFAR10, and 11.8\% on ImageNet. Moreover, we plot the certified accuracy in the second column of Figure \ref{fig:LowDimTransformations_MNISTCIFARImagenet} showing the insensitivity of \textsc{DeformRS-Par} to $\lambda$. Moreover, \textsc{DeformRS-Par} enjoys a certified accuracy of at least (90\%, 80\%, 40\%) at the larger scaling factors of (0.5, 0.4, 0.2) \ on MNIST, CIFAR10 and ImageNet, respectively.

\begin{figure*}[t]
    \centering
    \includegraphics[width=0.85\textwidth]{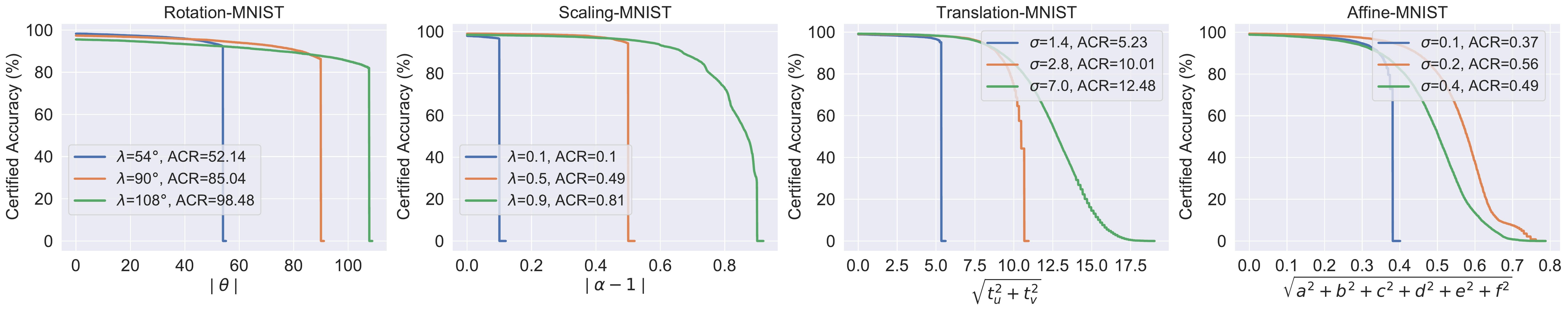}
    \includegraphics[width=0.85\textwidth]{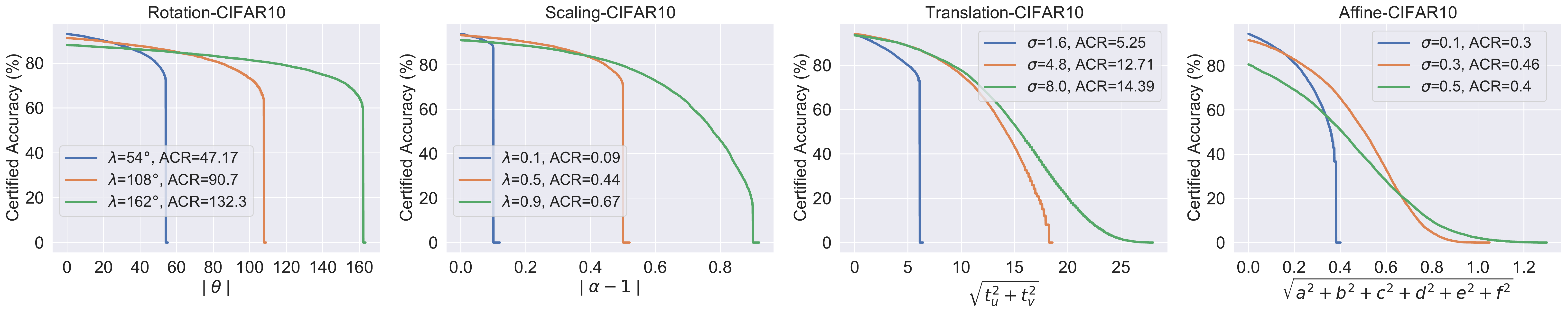}
    \includegraphics[width=0.85\textwidth]{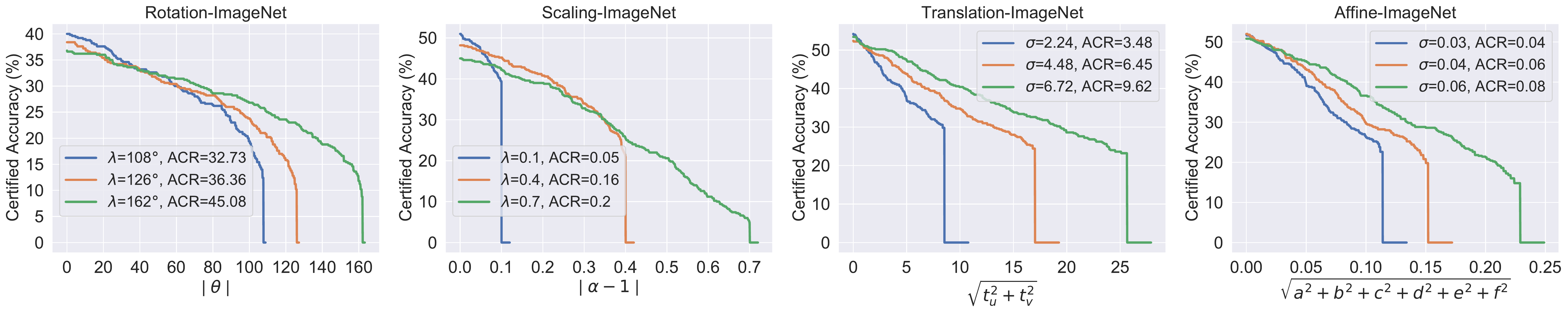}
    \caption{\small\textbf{Certified performance of \textsc{DeformRS-PAR}}. We show the effect of varying the smoothing parameters ($\lambda$, $\sigma$) on the certified accuracy of \textsc{DeformRS-PAR} against rotation, scaling, translation, and affine deformations.} \label{fig:LowDimTransformations_MNISTCIFARImagenet}
\end{figure*}

\textbf{Translation.}  Translation deformations are parameterized by two parameters $(t_u,t_v)$ that can generally be of any value. Thus, we employ two dimensional Gaussian smoothing as per Corollary \ref{cor:parametric_certification}, where $\sigma \in \{0.1, 0.2, \dots, 0.5\}$ for MNIST and CIFAR10, and $\sigma \in \{0.02, 0.03, \dots, 0.06\}$ for ImageNet. In this case, the granted certificate is of the form $(t_u^2 + t_v^2)^{\nicefrac{1}{2}} \leq \nicefrac{\sigma}{2} (\Phi^{-1}(p_A) - \Phi^{-1}(p_B))$. We compare against BAL, MOH, and LI and report the certified accuracy at a certification radius of at most 5 pixels.
\footnote{Since the image dimensions in our setting are normalized to $[-1, 1]$, we unnormalize the radius results to pixels in the original image for comparison and ease of interpretation in Figure \ref{fig:LowDimTransformations_MNISTCIFARImagenet}.}
As observed from Table \ref{tb:MNIST-CIFAR10}, \textsc{DeformRS-PAR} outperforms its best competitor by $2\%$ and $4\%$ on MNIST and CIFAR10, respectively. However, we observe that \textsc{DeformRS-Par} underperforms on ImageNet attaining $48.2\%$ certified accuracy compared to $63.3\%$ by LI as reported in Table \ref{tb:ImageNet}. We believe that \textsc{DeformRS-Par} performs worse on ImageNet due to the suboptimal training of the base classifier $f$ on ImageNet. This is evident in the third column of Figure \ref{fig:LowDimTransformations_MNISTCIFARImagenet}, which plots the certified accuracy over a range of different radii for several smoothing $\sigma$. Note that the certified accuracy of \textsc{DeformRS-Par} is $\sim 52\%$ at radius 0 over all $\sigma$. That is to say, the \textit{accuracy} of \textsc{DeformRS-Par} is already worse than the \textit{certified accuracy} at radius $5$ reported by LI.  However, the certified accuracy of \textsc{DeformRS-Par} for MNIST and CIFAR10  at radii of 7 and 8 pixels are at least 90\% and 80\% on MNIST and CIFAR10, respectively.

\begin{figure*}[t]
\centering
    \includegraphics[width=0.85\textwidth]{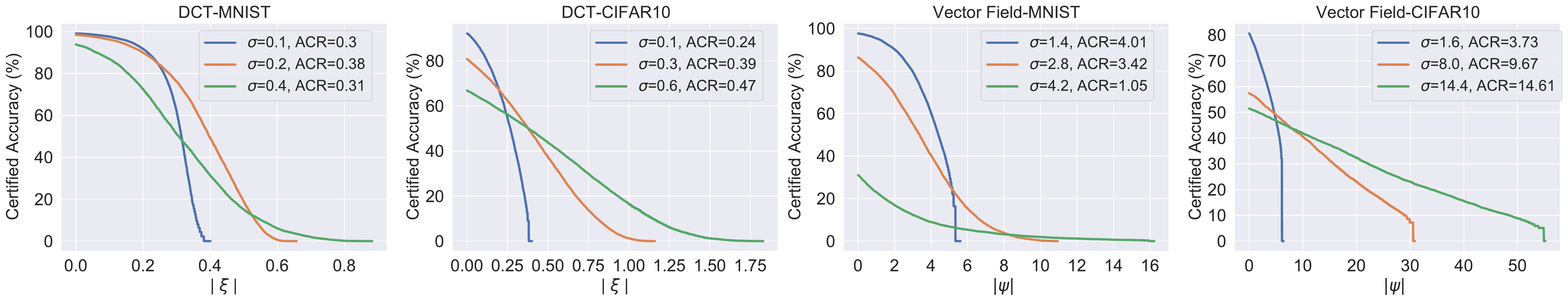}
    \caption{\small\textbf{Performance of \textsc{DeformRS-VF} and \textsc{DeformRS-PAR}.} We plot the certified accuracy curves of \textsc{DeformRS-PAR} against truncated DCT deformations (left) and \textsc{DeformRS-VF} against general vector field deformations (right).}\label{fig:HighDimTransformations}
\end{figure*}
\begin{figure}[t]
  \centering
\begin{tabular}{lll}
    \includegraphics[width=0.4\textwidth]{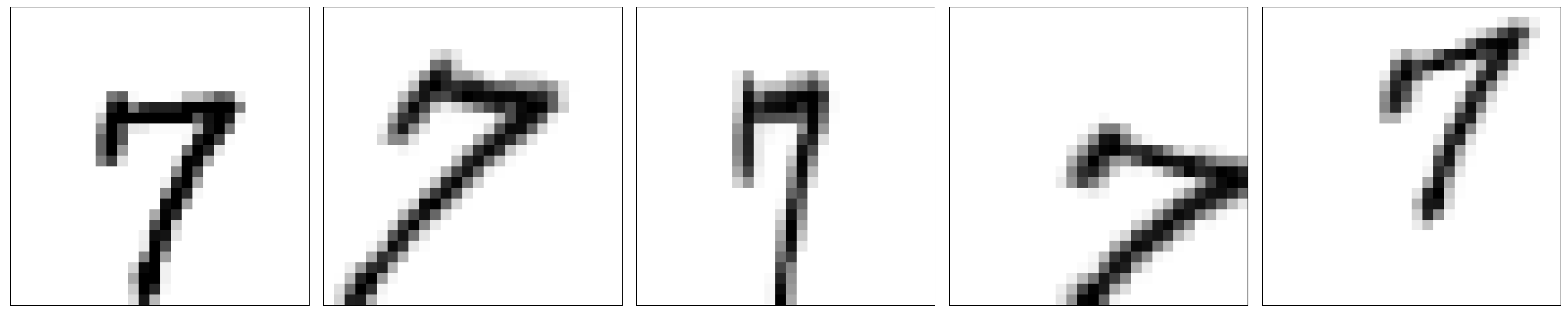} \\
    \includegraphics[width=0.4\textwidth]{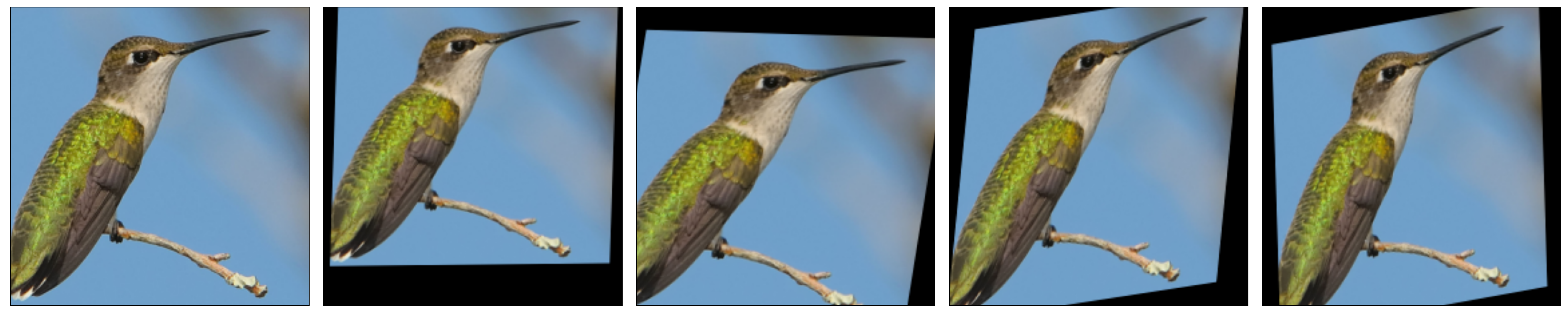}
\end{tabular}
    \caption{\small\textbf{Examples of certified affine deformations.} We sample affine parameters satisfying the certification inequality.
    }
    \label{fig:affine_samples}
\end{figure}

\subsection{\textsc{DeformRS-Par} against Affine Deformations}
Attaining high certified accuracy for individual deformations, as discussed earlier, requires the training of networks for these particular deformations. Thus, we train a \textit{single} \textsc{DeformRS-Par} network against affine deformations, where we certify it against several specializations of the affine certificate. Recall that the affine deformation is parameterized by 6 parameters. Since generally, there are no restrictions on the values of the affine parameters, we use Gaussian smoothing in Corollary \ref{cor:parametric_certification} to sample $\{a,b,c,d,e,f\}$ with $\sigma \in \{0.1, 0.2, \dots, 0.5\}$ for MNIST and CIFAR10 and $\sigma \in \{0.02, 0.03, \dots, 0.06\}$ on ImageNet. The certificate is thus in the form $\sqrt{a^2 + b^2 + c^2 + d^2 + e^2 + f^2} \leq \nicefrac{\sigma}{2} (\Phi^{-1}(p_A) - \Phi^{-1}(p_B))$. The last column of Figure \ref{fig:LowDimTransformations_MNISTCIFARImagenet} summarizes the certified accuracy of \textsc{DeformRS-PAR} on all three datasets. Note, the certified accuracy of \textsc{DeformRS-Par} at affine radius of $0.3$ on MNIST is $90\%$. This is equivalent, under specialization to a translation (\ie $a=b=c=d=0$), to a certified accuracy of $90\%$ for all translations of radius $0.15\times 28 = 4.2$ pixels (after unnormalization).

\textbf{Composition of deformations.} We specialize the certificate to a composition of several deformations and compare against the only work certifying deformation compositions (BAL). Following BAL, we consider the composition of shearing with a factor of $s$ followed by rotation with angle $\theta$. The vector field is given as follows:
\[
\begin{pmatrix}
u_{n,m}\\
v_{n,m}
\end{pmatrix}
= \underbrace{
\begin{pmatrix}
\cos(\theta)  & - \sin(\theta)\\
\sin(\theta)  & \,\,\,\,\, \cos(\theta) 
\end{pmatrix}
}_{\text{rotation}}
\underbrace{
\begin{pmatrix}
1  & s \\
0  & 1
\end{pmatrix}
}_{\text{shear}}
\begin{pmatrix}
n\\
m
\end{pmatrix}
-
\begin{pmatrix}
n\\
m
\end{pmatrix}.
\]

Note that this composition can be formulated as an affine deformation with $a =\cos(\theta) - 1$, $b = s \cos(\theta) - \sin(\theta)$, $c = \sin(\theta)$, $d = s \sin(\theta) + \cos(\theta) - 1$, and $e = f = 0 $. Therefore, Corollary \ref{cor:parametric_certification} grants the following certificate $\sqrt{s^2 -2s\sin(\theta)-4cos(\theta)+4} \leq \nicefrac{\sigma}{2}(\Phi^{-1}(p_A) - \Phi^{-1}(p_B))$. We compare against BAL, which achieves a certified accuracy of 54.2\% on CIFAR10 under the setting $|\theta| \leq 2\degree$ and $0 \leq s \leq 2\%$. To compute the certified accuracy of \textsc{DeformRS-Par}, note that the left hand side achieves its maximum of $0.0651$ at $\theta^*=-2\degree$ and $s^*=0.02$; thus, the certified accuracy of \textsc{DeformRS-Par} is the percentage of the test set classified correctly with a radius of at least $0.0651$. \textsc{DeformRS-Par} achieves a certified accuracy of 91.28\% on CIAFR10 and 43.6\% on ImageNet as per the last column in Figure \ref{fig:LowDimTransformations_MNISTCIFARImagenet}, thus outperforming BAL by 37\%. Note that, our affine certification allows for the seamless certification of all considered deformations in the literature. This surpasses any need to specialize a certificate for every deformation family of an affine nature. In fact, with  a single network trained with \textsc{DeformRS-Par} against affine deformations, we achieve non-trivial certified accuracies against several specialized deformations. Moreover, we consider certifying the same \textsc{DeformRS-Par} network under the composition of a rotation of angle $\theta$, scaling by factor $\alpha$, and a translation of parameters $(t_u,t_v)$. % 
Under such a setting,
we have $a=\alpha \cos(\theta)-1$, $b=-\alpha\sin(\theta)$, $c= \alpha \sin(\theta)$, $d=\alpha\cos(\theta)-1$, $e=t_u$, and $f=t_v$. Therefore, this grants the following certificate $\sqrt{2 + 2\alpha^2 - 4\alpha \cos(\theta) + t_u^2 + t_v^2} \leq \nicefrac{\sigma}{2}(\Phi^{-1}(p_A) - \Phi^{-1}(p_B))$. We consider certifying \textsc{DeformRS-Par} under the composite deformation of $|\theta| \leq 10\degree$, $0.8 \leq \alpha \leq 1.2$, and $t_u^2 + t_v^2 \leq 0.1$, where $0.1$ corresponds to a radius of 4 and 5 pixels of translation for images in MNIST and CIFAR10, respectively. To that end, we observe that the left hand side of the certificate attains a maximum of $0.503$, at which \textsc{DeformRS-Par} enjoys a certified accuracy of 79.78\% on MNIST and 50.41\% on CIFAR10 as per the last column in Figure \ref{fig:LowDimTransformations_MNISTCIFARImagenet}. To the best of our knowledge, this work is the first to consider such a composite deformation. In Figure \ref{fig:affine_samples}, we sample several certifiable affine deformations that satisfy the certificate inequality and apply them to MNIST and ImageNet images. We can observe  the richness of the certifiable affine maps in both datasets.

\subsection{\textsc{DeformRS-Par} - Truncated DCT Deformations}

We go beyond affine deformations in this section to cover parameterized truncated DCT deformations; particularly, the class of vector field deformations generated by taking the inverse DCT transform of a truncated window of size $k \times k \times 2$. We observe that this class of deformations can generate visually aligned deformations, which are generally not affine, as shown in Figure \ref{fig:pull}. Since the $k \times k \times 2$ DCT coefficients can take any values, we use Gaussian smoothing with $\sigma \in \{0.1, 0.2, \dots, 0.5\}$  as per Corollary \ref{cor:parametric_certification}. This grants a certificate of the form $\|\xi\|_2 \leq \nicefrac{\sigma}{2}(\Phi^{-1}(p_A) - \Phi^{-1}(p_B))$, where $\xi$ is the perturbation in the DCT coefficients. For simplicity, we set $k=2$ for all the experiments in this section. As per Figure \ref{fig:HighDimTransformations}, \textsc{DeformRS-Par} certifies perturbations in the DCT coefficients with a certified accuracy of 90\% and 80\% at radius 0.2 on MNIST and CIFAR10, respectively. Unlike individual deformations or their compositions, it is generally difficult to interpret the certified class of DCT deformations; we instead visualise samples from the certified region of DCT coefficients in Figure \ref{fig:dct_vf_samples}. We observe interesting certified deformations that are visually aligned 
%with semantic perturbations 
resembling different hand written digits in MNIST or ripples in CIFAR10.

\begin{figure}[t]
  \centering
\begin{tabular}{ll}
    \includegraphics[width=0.4\textwidth]{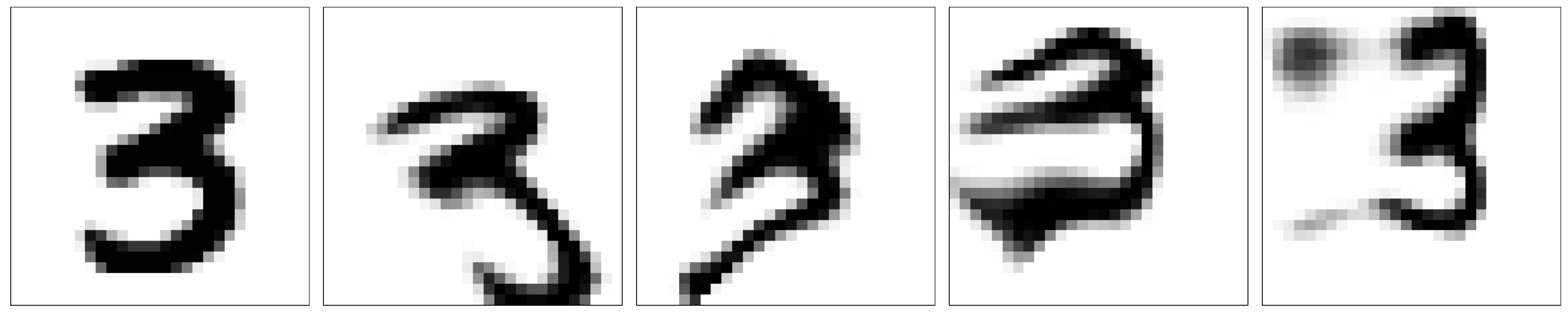}\\
    \includegraphics[width=0.4\textwidth]{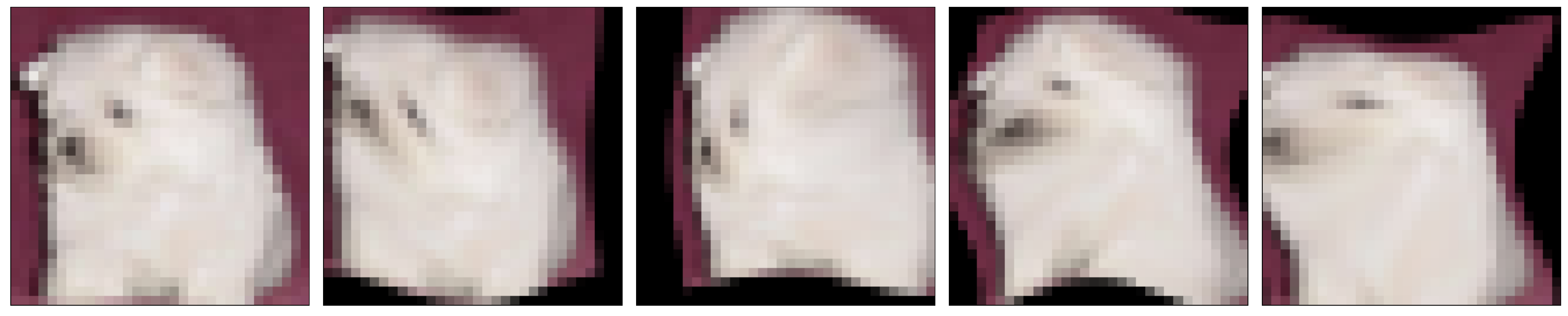}
\end{tabular}
    \caption{\small\textbf{Examples of certified truncated DCT.} We sample truncated DCT coefficients that satisfy the certification inequality.}
    \label{fig:dct_vf_samples}
\end{figure}
\subsection{\textsc{DeformRS-VF} - Vector Field Deformations}
We leverage Theorem \ref{theo:vector_field_certification} to certify against a general vector field deformation $\psi$. Note that such vector fields are in general of size $N \times M \times 2$ and can take any values. Thus, Gaussian smoothing is an appropriate choice, where we set $\sigma \in \{0.1, 0.2, \dots, 0.5\}$. We plot in the second row of Figure \ref{fig:HighDimTransformations} the certified accuracy of \textsc{DeformRS-VF} for an unnormalized vector field. We observe that \textsc{DeformRS-VF} achieves a certified accuracy of 90\% and 60\% at a radius of 2 pixels on MNIST and CIFAR10, respectively. Note that while vector field deformations can specialize to all previously considered deformations as special cases (\eg rotations), they suffer from the curse of dimensionality ($\psi$ is of size $2NM$) granting certification to only imperceptible deformations. For instance, consider the vector field generated from a parameterized translation such that $(t_u^2 + t_v^2)^{\nicefrac{1}{2}} \leq 2$. The corresponding vector field will have an energy of at most $\sqrt{2MN}$. That is to say, to certify vector field deformations representing translations of 2 pixels in $\ell_2$, the certification radius of the vector field should be at least $\sqrt{2MN}$, which is significantly larger than the radius $2$ with the earlier reported accuracy. This is a classical trade-off between the generality of the deformation family and the imperceptibility of the certifiable deformation. We leave the rest of the experiments for the \textbf{Appendix}.

\textbf{Acknowledgments.} This publication is based upon work supported by the King Abdullah University of Science and Technology (KAUST) Office of Sponsored Research (OSR) under Award No. OSR-CRG2019-4033.

% \newpage
\bibliography{references}
\newpage
\onecolumn
\appendix
\section{Proofs}
We first start by defining Lipschitz continuity and the corresponding tightest Lipschitz constant.

\begin{proposition} Consider a differentiable function $g:\mathbb{R}^n \rightarrow \mathbb{R}$. If $\text{sup}_x \|\nabla g(x)\|_* \leq L$ where $\|\cdot\|_*$ has a dual norm 
$\|z\| = \max_x z^\top x ~~ ~\text{s.t. } \|x\|_* \leq 1$, then $g$ is $L$-Lipschitz under norm $\|\cdot\|_*$, that is $|g(x) - g(y)| \leq L \|x - y\|$.
\end{proposition}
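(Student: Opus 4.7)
The plan is to reduce the multivariate Lipschitz estimate to a one-dimensional calculation along the segment joining $x$ to $y$, and then invoke the duality pairing between $\|\cdot\|$ and $\|\cdot\|_*$ to control the integrand. Concretely, I parameterize the segment by $\gamma(t) = x + t(y - x)$ for $t \in [0,1]$ and consider the scalar composition $h(t) := g(\gamma(t))$. Since $g$ is differentiable, the chain rule gives $h'(t) = \nabla g(\gamma(t))^\top (y - x)$, and the fundamental theorem of calculus then yields
\[
g(y) - g(x) \;=\; h(1) - h(0) \;=\; \int_0^1 \nabla g(\gamma(t))^\top (y - x)\, dt.
\]

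The second step is to bound the integrand pointwise via the duality inequality. From the definition of $\|\cdot\|$ supplied in the statement, namely $\|w\| = \max\{z^\top w : \|z\|_* \leq 1\}$, a simple rescaling argument (replace $z$ by $z / \|z\|_*$ whenever $z \neq 0$ and pair against $w$) gives $|z^\top w| \leq \|z\|_* \, \|w\|$ for every $z$ and $w$. Applying this with $z = \nabla g(\gamma(t))$ and $w = y - x$, and invoking the hypothesis $\sup_x \|\nabla g(x)\|_* \leq L$, I obtain $|\nabla g(\gamma(t))^\top (y - x)| \leq L \, \|y - x\|$ uniformly in $t \in [0,1]$.

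Finally, the triangle inequality for integrals (moving the absolute value inside the integral) delivers
\[
|g(y) - g(x)| \;\leq\; \int_0^1 \bigl|\nabla g(\gamma(t))^\top (y - x)\bigr|\, dt \;\leq\; \int_0^1 L\, \|y - x\|\, dt \;=\; L\, \|y - x\|,
\]
which is exactly the claimed $L$-Lipschitz bound. I do not anticipate a genuine obstacle: this is the classical three-step argument (parameterize, dualize, integrate). The only point that warrants minor care is the direction of the duality inequality, given the slightly unusual way primal and dual are labelled in the statement, and ensuring that $h$ is regular enough for the fundamental theorem of calculus to apply, which is automatic because $h'$ is bounded on the compact interval $[0,1]$ and hence integrable.
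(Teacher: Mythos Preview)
Your proof is correct and follows essentially the same approach as the paper: parameterize the segment from $x$ to $y$, apply the fundamental theorem of calculus to the composition $g\circ\gamma$, and bound the integrand via the dual-norm inequality $|z^\top w| \leq \|z\|_*\|w\|$. The paper's argument is slightly terser (it writes $\nabla\gamma$ instead of $y-x$ and does not spell out the duality step), but the structure is identical.
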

\begin{proof}
Consider some $x, y \in \mathbb{R}^n$ and a parameterization in $t$ as $\gamma(t) = (1-t)x + ty ~~ \forall t\in[0,1]$. Note that $\gamma(0) = x$ and $\gamma(1) = y$. By the fundamental Theorem of calculus we have:
\begin{align*}
    |g(y) - g(x)| = \left|g(\gamma(1)) - g(\gamma(0))\right| &= \left|\int_{0}^1 \frac{dg(\gamma(t))}{dt} dt\right| = \left|\int_{0}^1 \nabla g^\top \nabla \gamma dt\right| \leq  \int_{0}^1 \left|\nabla g^\top \nabla \gamma \right| dt\\
    & \leq \int_{0}^1 \|\nabla g(x)\|_* \|\nabla \gamma(t)\| dt \leq L \|y-x\| 
\end{align*}
\end{proof}

\noindent To prove Theorem \ref{theo:vector_field_certification} and corollary \ref{cor:parametric_certification}, we note that any Lipschitz function is certifiable.

\begin{theorem}\label{theo:smoothness-certificate}
Let $f:\mathbb{R}^n \rightarrow \mathbb{R}$, $f^i$ be $L$-Lipschitz continuous under norm $\|\cdot\|_*$ $\forall i\in\{1,\dots,K\}$, and $c_A = \text{arg}\max_i f^i(x)$. Then, we have $\text{arg}\max_i f^i(x+\delta) = c_A$ for all $\delta$ satisfying:
\[\|\delta\| \leq \frac{1}{2L}\left(f^{c_A}(x)- \max_c f^{c \neq c_A}(x)\right).
\]
\end{theorem}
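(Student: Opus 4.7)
The plan is to reduce the argmax-preservation statement to two one-sided Lipschitz inequalities, one lower bound for the winning class and one upper bound for each competitor, and then combine them using the margin condition on $\|\delta\|$. This is the standard pattern for converting Lipschitz smoothness into a certificate, and it exactly mirrors the Gaussian/uniform calculations used earlier in the paper; here the bound is just cleaner because it holds under any dual-norm pair supplied by the previous proposition.

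First I would fix an arbitrary competing class $c \neq c_A$ and an arbitrary $\delta$ satisfying the hypothesis. Applying the Lipschitz property of $f^{c_A}$ with the pair $(x, x+\delta)$ gives
\begin{equation*}
f^{c_A}(x+\delta) \;\geq\; f^{c_A}(x) - L\|\delta\|,
\end{equation*}
and applying the Lipschitz property of $f^{c}$ with the same pair gives
\begin{equation*}
f^{c}(x+\delta) \;\leq\; f^{c}(x) + L\|\delta\|.
\end{equation*}
Subtracting these two bounds yields
\begin{equation*}
f^{c_A}(x+\delta) - f^{c}(x+\delta) \;\geq\; \bigl(f^{c_A}(x) - f^{c}(x)\bigr) - 2L\|\delta\|.
\end{equation*}

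Next I would use the hypothesis on $\|\delta\|$ to close the inequality. Since $f^{c}(x) \leq \max_{c \neq c_A} f^{c}(x)$, the assumption $\|\delta\| \leq \tfrac{1}{2L}\bigl(f^{c_A}(x) - \max_{c \neq c_A} f^{c}(x)\bigr)$ implies $2L\|\delta\| \leq f^{c_A}(x) - f^{c}(x)$, so the right-hand side above is nonnegative. This shows $f^{c_A}(x+\delta) \geq f^{c}(x+\delta)$ for every $c \neq c_A$, hence $c_A \in \arg\max_i f^i(x+\delta)$, which is exactly the claim.

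There is essentially no hard step: the proof is two triangle-style Lipschitz estimates plus an algebraic rearrangement. The only minor subtlety is that the inequality on $\|\delta\|$ is non-strict, so the conclusion is that $c_A$ is an argmax rather than the unique argmax; this matches the convention used throughout the paper (ties at the boundary are acceptable, and the radius can be taken strict if uniqueness is desired). Theorem~\ref{theo:vector_field_certification} and Corollary~\ref{cor:parametric_certification} then follow by identifying the Lipschitz constant $L$ of the smoothed classifier $\hat g$ (resp.\ $\tilde g_\phi$) under the appropriate norm, via the earlier proposition applied to the expectation of $f$ against a Gaussian or uniform density.
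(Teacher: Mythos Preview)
Your proof is correct and follows essentially the same approach as the paper: two one-sided Lipschitz bounds (lower for $c_A$, upper for the competitor), subtract, and use the margin hypothesis on $\|\delta\|$. Your version is in fact slightly more complete, since you compare $c_A$ against \emph{every} competing class $c$, whereas the paper only treats the runner-up $c_B = \arg\max_{c\neq c_A} f^c(x)$ and leaves the extension to all other classes implicit.
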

\begin{proof}
Take $c_B = \argmax_c f^{c \neq c_A}(x)$. Hence:
\begin{align*}
    |f^{c_A}(x+\delta) - f^{c_A}(x) |\leq L\|\delta\| \implies f^{c_A}(x+\delta) \geq f^{c_A}(x) - L\|\delta\|\\
    |f^{c_B}(x+\delta) - f^{c_B}(x) |\leq L\|\delta\| \implies f^{c_B}(x+\delta) \leq f^{c_B}(x) + L\|\delta\|
\end{align*}
By subtracting the inequalities and re-arranging terms, we have that as long as $f^{c_A}(x) - L\|\delta\| > f^{c_B}(x) + L\|\delta\|$, \ie the bound in the Theorem, then $f^{c_A}(x+\delta) > f^{c_B}(x+\delta)$, completing the proof.
\end{proof}

\noindent Next, we deploy Theorem \ref{theo:smoothness-certificate} to prove Theorem \ref{theo:vector_field_certification} and corollary \ref{cor:parametric_certification}. We first revisit the definition of a deformation smoothed classifier.

\begin{definition}[restatement]\label{defi:smooth_domain_classifier}
Given a classifier $f : \mathbb R^n \rightarrow \mathcal P(\mathcal Y)$ and an interpolation function $I_T : [0,1]^n \times \mathbb{R}^{2|\Omega^\mathbb{Z}|} \rightarrow [0,1]^n$, we define a deformation smoothed classifier as:
\begin{align*}
    \hat{g}(x,p) = \mathbb{E}_{\epsilon \sim \mathcal{D}}
    \left[f\left(I_T(x, p + \epsilon)\right)\right].
\end{align*}
\end{definition}

At first, following Theorem \ref{theo:smoothness-certificate}, we show that $\hat{g}(x,p)$ is Lipschitz in $p$.

\begin{proposition}
$\hat{g}(x,p) = \mathbb{E}_{\epsilon \sim \mathcal{U}[-\lambda,\lambda]}
    \left[f\left(I_T(x, p + \epsilon)\right)\right]$ is $\nicefrac{1}{2\lambda}-$Lipschitz in $p$ under $\|\cdot\|_\infty$ norm.
\end{proposition}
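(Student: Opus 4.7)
The plan is to bound the gradient of $\hat g^i$ with respect to $p$ coordinate-wise and then apply the previous Proposition with $\|\cdot\|_\ast = \|\cdot\|_\infty$ (whose dual is $\|\cdot\|_1$). Concretely, I will express $\hat g^i$ as a convolution-type integral, change variables so that the dependence on $p$ sits only in the integration limits, and then read off the partial derivatives via the Leibniz rule. Letting $N = 2|\Omega^{\mathbb Z}|$ denote the dimension of $p$ and $f^i \in [0,1]$ the $i$-th output coordinate of the base classifier, I rewrite
\begin{equation*}
\hat g^i(x,p) \;=\; \frac{1}{(2\lambda)^N}\int_{[-\lambda,\lambda]^N} f^i\!\left(I_T(x,p+\epsilon)\right)\,d\epsilon \;=\; \frac{1}{(2\lambda)^N}\int_{\prod_{k=1}^N [p_k-\lambda,p_k+\lambda]} f^i\!\left(I_T(x,u)\right)\,du,
\end{equation*}
using the substitution $u = p + \epsilon$.

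Next, I differentiate with respect to each coordinate $p_j$ using the Leibniz rule. Because $p_j$ appears only as an endpoint of the $j$-th one-dimensional integral, all interior terms vanish and I obtain
\begin{equation*}
\frac{\partial \hat g^i}{\partial p_j}(x,p) \;=\; \frac{1}{(2\lambda)^N}\int_{\prod_{k\neq j}[p_k-\lambda,p_k+\lambda]} \Bigl[f^i\bigl(I_T(x,u)\bigr)\Big|_{u_j = p_j+\lambda} - f^i\bigl(I_T(x,u)\bigr)\Big|_{u_j = p_j-\lambda}\Bigr]\,du_{-j}.
\end{equation*}
Since $f^i$ takes values in $[0,1]$, the bracketed integrand lies in $[-1,1]$; the remaining $(N-1)$-dimensional box has volume $(2\lambda)^{N-1}$, so $\bigl|\partial_{p_j}\hat g^i(x,p)\bigr| \le (2\lambda)^{N-1}/(2\lambda)^N = 1/(2\lambda)$ uniformly in $p$ and $j$. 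Taking the maximum over $j$ yields $\|\nabla_p \hat g^i(x,p)\|_\infty \le 1/(2\lambda)$, and then the previous Proposition (with $\|\cdot\|_\ast = \|\cdot\|_\infty$ having dual norm $\|\cdot\|_1$) delivers the desired Lipschitz bound.

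The only real obstacle is a regularity/differentiation-under-the-integral subtlety: bilinear interpolation $I_T$ is piecewise smooth but only continuous across grid boundaries, so $f^i\circ I_T$ need not be $C^1$ in $u$. I would address this by noting that $f^i \circ I_T$ is bounded and measurable (in fact Lipschitz on each interpolation cell), so the one-dimensional absolute continuity required to apply the fundamental theorem of calculus (and thus Leibniz's rule in the form used above) holds; alternatively, one can bound the finite difference $\hat g^i(x,p+h e_j) - \hat g^i(x,p)$ directly from the integral representation, which gives the same $1/(2\lambda)$ constant without ever differentiating $f^i \circ I_T$ in $u$. With that technicality handled, the argument concludes as described.
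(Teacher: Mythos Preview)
Your proposal is correct and follows essentially the same route as the paper: write the uniform expectation as a normalized integral over a box, change variables so that $p$ appears only in the integration limits, apply the Leibniz rule coordinate-wise to get a difference of two values of $f^i\in[0,1]$, and conclude $\|\nabla_p \hat g^i\|_\infty\le 1/(2\lambda)$ before invoking the gradient-to-Lipschitz proposition. The paper's proof is identical in structure (it treats one coordinate explicitly and then argues by symmetry), and your added remark on the regularity of $f^i\circ I_T$ and the finite-difference alternative is a welcome refinement that the paper omits.
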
  
\begin{proof}
It suffices to show that $\|\nabla_p \hat{g}(x,p)\|_\infty \leq \nicefrac{1}{2\lambda}$ to complete the proof. Without loss of generality, we analyze $\nicefrac{\partial \hat g}{\partial p_1}$. Let $\hat p = [p_2, \dots, p_n] \in \mathbb{R}^{n-1}$, then:
\begin{align*}
    \frac{ \partial\hat g}{\partial p_1} &= \frac{1}{(2\lambda)^n}\frac{\partial}{\partial p_1} \int_{[-\lambda,\lambda]^{n-1}} \int_{-\lambda}^{\lambda} f(I_T(x, p_1+ \epsilon_1, \hat{p} +  \hat{\epsilon})) d\epsilon_1 d^{n-1}\hat \epsilon \\ 
    &= \frac{1}{(2\lambda)^n} \int_{[-\lambda,\lambda]^{n-1}} \frac{\partial}{\partial p_1} \int_{p_1-\lambda}^{p_1+\lambda} f(I_T(x, t, \hat{p} +  \hat{\epsilon})) dt d^{n-1}\hat \epsilon \\ 
    &= \frac{1}{(2\lambda)^n} \int_{[-\lambda,\lambda]^{n-1}}  f(I_T(x, p_1+\lambda, \hat{p} +  \hat{\epsilon})) - f(I_T(x, p_1\lambda, \hat{p} +  \hat{\epsilon})) dt d^{n-1}\hat \epsilon 
\end{align*}
Thus, 
\begin{align*}
    \left|\frac{ \partial\hat g}{\partial p_1}\right| &\leq \frac{1}{(2\lambda)^n} \int_{[-\lambda,\lambda]^{n-1}}  \left|f(I_T(x, p_1+\lambda, \hat{p} +  \hat{\epsilon})) - f(I_T(x, p_1\lambda, \hat{p} +  \hat{\epsilon}))\right| dt d^{n-1}\hat \epsilon  \leq \frac{1}{2\lambda}.
\end{align*}
The second and last steps follow with the change of variable $t = p_1+ \epsilon_1 \epsilon_1$ and Leibniz rule. Similarly, $\left| \nicefrac{\partial\hat g}{\partial p_i}\right| \leq \nicefrac{1}{2\lambda} \,\, \forall i$. This results in having $\|\nabla_p \hat g(x)\|_{\infty} = \max_i  \left|\nicefrac{\partial\hat g}{\partial p_i}\right| \leq \nicefrac{1}{2\lambda} $.
\end{proof}

At last, we show that the composite $\Phi^{-1}(\hat{g}(x,p)$ is Lipschitz in $p$ under $\|\cdot\|_2$ norm. Following Theorem \ref{theo:smoothness-certificate}, we show that $\hat{g}(x,p)$ is Lipschitz in $p$.

\begin{proposition}
$\Phi^{-1}(\hat{g}(x,p)) = \Phi^{-1}(\mathbb{E}_{\epsilon \sim \mathcal{N}(0,\sigma^2I)}
    \left[f\left(I_T(x, p + \epsilon)\right)\right])$ is $\nicefrac{1}{\sigma}-$Lipschitz in $p$ under $\|\cdot\|_2$ norm.
\end{proposition}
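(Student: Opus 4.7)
The plan is to reduce the claim to the standard Gaussian-smoothing Lipschitz bound of Cohen et al.\ and Salman et al., applied now in the pixel-coordinate variable $p$ rather than the image variable. Fix $x$ and define $\tilde f:\mathbb{R}^{2|\Omega^\mathbb{Z}|}\to[0,1]$ by $\tilde f(q) = f(I_T(x,q))$; by linearity of expectation, $\hat g(x,p) = \mathbb{E}_{\epsilon\sim\mathcal N(0,\sigma^2 I)}[\tilde f(p+\epsilon)] = (\tilde f * \varphi_\sigma)(p)$, where $\varphi_\sigma$ is the isotropic Gaussian density. Because $\varphi_\sigma$ is smooth and $\tilde f$ is bounded, differentiation under the integral sign is justified, and we can work with a gradient-norm criterion of the type already used earlier in the appendix.

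First I would use Stein's identity (equivalently, differentiating the Gaussian convolution and exchanging $\nabla_p$ with the integral) to obtain
\[
\nabla_p \hat g(x,p) \;=\; \frac{1}{\sigma^2}\,\mathbb{E}_{\epsilon}\bigl[\epsilon\, \tilde f(p+\epsilon)\bigr].
\]
Combining this with the chain rule gives
\[
\nabla_p \Phi^{-1}(\hat g(x,p)) \;=\; \frac{\nabla_p \hat g(x,p)}{\varphi\bigl(\Phi^{-1}(\hat g(x,p))\bigr)},
\]
where $\varphi$ denotes the standard normal density. Thus, by the line-integral argument used in the first Proposition of the appendix, it suffices to prove the pointwise inequality
\[
\|\nabla_p \hat g(x,p)\|_2 \;\le\; \tfrac{1}{\sigma}\,\varphi\!\bigl(\Phi^{-1}(\hat g(x,p))\bigr).
\]

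The hard step is precisely this pointwise inequality, which is the essence of the Gaussian randomized-smoothing certificate. I would prove it by a worst-case (Neyman--Pearson) argument: among all measurable $\tilde f:\mathbb{R}^{2|\Omega^\mathbb{Z}|}\to[0,1]$ satisfying $\mathbb{E}_\epsilon[\tilde f(p+\epsilon)] = a$, the quantity $\|\mathbb{E}_\epsilon[\epsilon\,\tilde f(p+\epsilon)]\|_2$ is maximized by a half-space indicator $\tilde f(q) = \mathbf{1}\{\langle w,q-p\rangle \ge c\}$ with $\|w\|_2 = 1$ and $c = \sigma\,\Phi^{-1}(1-a)$. A direct one-dimensional Gaussian computation on this half-space then evaluates $\|\nabla_p \hat g(x,p)\|_2$ exactly to $\tfrac{1}{\sigma}\varphi(\Phi^{-1}(a))$, yielding the required bound.

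Once the pointwise estimate $\|\nabla_p \Phi^{-1}(\hat g(x,p))\|_2 \le 1/\sigma$ is established, the fundamental-theorem-of-calculus argument identical to the one in the first Proposition of the appendix, i.e.\ integrating along the segment between two points $p_1,p_2$, upgrades it to the global statement $|\Phi^{-1}(\hat g(x,p_1)) - \Phi^{-1}(\hat g(x,p_2))| \le \tfrac{1}{\sigma}\|p_1-p_2\|_2$, completing the proof.
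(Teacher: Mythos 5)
Your proof is correct and follows essentially the same route as the paper's: both reduce the claim to the pointwise bound $\|\nabla_p \hat g(x,p)\|_2 \le \tfrac{1}{\sigma}\varphi\bigl(\Phi^{-1}(\hat g(x,p))\bigr)$ and establish it by showing the extremal $f$ subject to $\mathbb{E}_\epsilon[f(I_T(x,p+\epsilon))]=r$ is a half-space indicator $\mathbbm{1}\{u^\top v/\sigma \ge -\Phi^{-1}(r)\}$, exactly as in the Salman et al.\ argument the paper cites. If anything, your explicit appeal to Neyman--Pearson to justify optimality of the half-space is slightly more complete than the paper, which only exhibits the indicator and evaluates its objective value.
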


\begin{proof}
We want to show that $\|\nabla \Phi^{-1}(\hat{g}(x,p))\|_2 \leq 1$. Following the argument presented in~\cite{salman2019provably}, it suffices to show that, for any unit norm direction $u$ and $r = \hat{g}(x,p)$, we have:
\begin{equation}
    \label{eq:lipschitz_bound_l2}
    \sigma u^\top \nabla_p \hat{g}(x,p) \leq \frac{1}{\sqrt{2\pi}}\exp\left(-\frac{1}{2} (\Phi^{-1}(r))^2\right).
\end{equation}
We start by noticing that:
\begin{align*}
  \sigma u^\top \nabla_p \hat{g}(x,p)
    & = \mathbb E_{v\sim \mathcal N (0,  \sigma^2 I)} \left[f(I_T(x, p+v)) \frac{u^\top v}{\sigma}\right].
\end{align*}

\noindent We now need to find the optimal $f^*:\mathbb{R}^n\rightarrow [0,1]$ that satisfies $\hat{g}(x,p) = \mathbb{E}_{v \sim \mathcal{N}(0,\sigma^2I)}[f^*(I_T(x,v+p))] = r$ while maximizing $\mathbb E_{v\sim \mathcal N (0,  \sigma^2 I)} [f(I_T(x, p+v)) \nicefrac{u^\top v}{\sigma}]$.We argue that the maximizer is the following function:
\[
f^*(I_T(x,p+v)) = \mathbbm{1}\left\{\frac{u^\top v}{\sigma} \ge - \Phi^{-1}(r)\right\}.
\]
To prove that $f^*$ is indeed the maximizer, we first show feasibility. \textbf{(i)}: 
It is clear that $f^*: \mathbb{R}^n \rightarrow [0,1]$, and note that (\textbf{ii}):
\begin{align*}
    \mathbb{E}_{v \sim \mathcal{N}(0,\sigma^2I))}\left[\mathbbm{1}\left\{\frac{u^\top v}{\sigma} \ge - \Phi^{-1}(r)\right\}\right] &= \mathbb{P}_{x \sim \mathcal{N}(0,1)}(x \ge -\Phi^{-1}(r)) = 1 - \Phi(-\Phi^{-1}(r)) = r.
\end{align*}
To show the optimality of $f^*$, we show that it attains the upper bound:
\begin{align*}
    \mathbb{E}_{v\sim \mathcal{N}(0,\sigma^2I))} \Big[\mathbbm{1}\left\{\frac{u^\top v}{\sigma} \ge-\Phi^{-1}(r)\right\} \frac{u^\top  v}{\sigma}\Big]  &= \mathbb{E}_{x \sim \mathcal{N}(0,1)}\Big[x \mathbbm{1}\left\{x \ge -\Phi^{-1}(r)\right\}\Big] \\
    & = \frac{1}{\sqrt{2\pi}}\int_{-\Phi^{-1}(r)}^\infty x \exp\left(-\frac{1}{2}x^2\right)dx \\
    & = \frac{1}{\sqrt{2\pi}} \exp\left(-\frac{1}{2} (\Phi^{-1}(r))^2\right).
\end{align*}
\end{proof}

\begin{theorem}\label{theo:vector_field}[restatement] Consider $\hat{g}(x,p) = \mathbb{E}_{\epsilon \sim \mathcal{D}}
    \left[f\left(I_T(x, p + \epsilon)\right)\right]$ where $\hat g$ assigns the class $c_A$ for an input $x$, \ie $c_A = \argmax_c \hat{g}^c(x,p)$ with:
\[
p_A = \hat{g}^{c_A}(x,p) \quad\text{ and }\quad p_B = \max_{i \neq c_A} \hat{g}^{i}(x,p)
\]
then $\argmax_c \hat g^c(x, p+\psi) = c_A$ for vector field perturbations satisfying:
\begin{equation}
\begin{aligned}
\begin{split}
    &\|\psi\|_1 \leq \lambda \left(p_A - p_B\right) \qquad \qquad \qquad \quad \,\,\,\, \text{for } \mathcal D = \mathcal U[-\lambda, \lambda],  \\
    &\|\psi\|_2 \leq \frac{\sigma}{2}\left(\Phi^{-1}(p_A) - \Phi^{-1}(p_B)\right) \qquad \text{for } \mathcal D = \mathcal N(0, \sigma^2I),
\end{split}
\end{aligned}
\end{equation}
\end{theorem}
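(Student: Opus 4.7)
The plan is to combine the abstract Lipschitz-to-certificate result (Theorem \ref{theo:smoothness-certificate}) with the concrete Lipschitz bounds on the smoothed classifier $\hat{g}$ just proved in the two preceding Propositions. In both cases we view $\hat{g}^i$, for each class $i$, as a scalar function of the grid $p$, bound its Lipschitz constant in an appropriate norm, and then invoke Theorem \ref{theo:smoothness-certificate} with $\delta = \psi$ to obtain a certification radius in the dual norm. The key identification is that a vector-field perturbation $\psi$ applied to the grid is precisely an additive perturbation $p \mapsto p + \psi$, so no extra work is needed to match the threat model to the Lipschitz hypothesis.

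For the uniform case $\mathcal{D} = \mathcal{U}[-\lambda, \lambda]$, the corresponding Proposition gives $\|\nabla_p \hat{g}^i(x, p)\|_\infty \leq \nicefrac{1}{2\lambda}$ for every class $i$. Via the duality statement of Proposition 1, this renders each $\hat{g}^i$ a $\nicefrac{1}{2\lambda}$-Lipschitz map in the $\ell_1$ metric. Substituting $L = \nicefrac{1}{2\lambda}$ into the certificate bound $\nicefrac{1}{2L}(p_A - p_B)$ of Theorem \ref{theo:smoothness-certificate} yields exactly $\|\psi\|_1 \leq \lambda(p_A - p_B)$.

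For the Gaussian case $\mathcal{D} = \mathcal{N}(0, \sigma^2 I)$, the second Proposition establishes that $\Phi^{-1} \circ \hat{g}^i$ is $\nicefrac{1}{\sigma}$-Lipschitz under $\|\cdot\|_2$ (a self-dual norm). Since $\Phi^{-1}$ is strictly increasing on $(0,1)$, we have $\arg\max_i \Phi^{-1}(\hat{g}^i(x,p)) = \arg\max_i \hat{g}^i(x,p) = c_A$, and the runner-up class under $\Phi^{-1} \circ \hat{g}$ coincides with the runner-up class under $\hat{g}$ itself. Applying Theorem \ref{theo:smoothness-certificate} to the transformed scores with $L = \nicefrac{1}{\sigma}$ and margin $\Phi^{-1}(p_A) - \Phi^{-1}(p_B)$ delivers $\|\psi\|_2 \leq \nicefrac{\sigma}{2}(\Phi^{-1}(p_A) - \Phi^{-1}(p_B))$.

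I anticipate that the hard part is not the final bookkeeping above but rather the Lipschitz estimates that feed into it, particularly the Gaussian one: showing that $\|\nabla_p \Phi^{-1}(\hat{g})\|_2 \leq \nicefrac{1}{\sigma}$ requires the variational argument identifying the worst-case base classifier $f^*$ as the indicator of a half-space, which is the only non-routine step. Given that both Propositions have already been proved, the present theorem itself reduces to the two-line application of Theorem \ref{theo:smoothness-certificate} outlined above, with the sole care being the monotonicity argument for transporting the $\arg\max$ through $\Phi^{-1}$ in the Gaussian case.
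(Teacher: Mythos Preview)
Your proposal is correct and follows essentially the same approach as the paper: apply Theorem \ref{theo:smoothness-certificate} with $L=\nicefrac{1}{2\lambda}$ to $\hat g$ in the uniform case and with $L=\nicefrac{1}{\sigma}$ to $\Phi^{-1}\circ\hat g$ in the Gaussian case, invoking the monotonicity of $\Phi^{-1}$ to carry the $\arg\max$ and runner-up through. The paper's own proof is a two-line substitution exactly along these lines, and your added remarks on norm duality and on where the real work lies (the preceding Lipschitz propositions) are accurate elaborations.
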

\begin{proof}
The proof follows immediately by substituting $L$ and $f(x)$ in Theorem \ref{theo:smoothness-certificate} by  $L=\nicefrac{1}{2\lambda}$, $\hat{g}(x,p)$ and $L = \nicefrac{1}{\sigma}$, $\Phi^{-1}(\hat{g}(x,p))$ for Uniform and Gaussian distributions, respectively. Due to the monotonicity of $\Phi^{-1}$, $\max_{i\neq c_A} \Phi^{-1}(\hat{g}^i(x,p)) =  \Phi^{-1}(\max_{i\neq c_A} \hat{g}^i(x,p))$ completing the proof.
\end{proof}

\begin{corollary}\label{corr:param_deform}[restatement]
Suppose that $\tilde g$ assigns the class $c_A$ for an input $x$, \ie $c_A = \text{arg} \max_c \tilde{g}_\phi(x,p)$ with:
\[
p_A = \tilde{g}_\phi^{c_A}(x,p) \quad\text{ and }\quad p_B = \max_{i \neq c_A} \tilde{g}_\phi^{i}(x,p),
\]
then $\text{arg}\max_c \tilde g_{\phi+\xi}(x, p) = c_A$ for all parametric domain perturbations satisfying:
\begin{align}
\begin{split}
    &\|\xi\|_1 \leq \lambda \left(p_A - p_B\right) \qquad \qquad \qquad \quad \,\,\,\, \text{for } \mathcal D = \mathcal{U}[-\lambda, \lambda], \\
    &\|\xi\|_2 \leq \frac{\sigma}{2}\left(\Phi^{-1}(p_A) - \Phi^{-1}(p_B)\right) \qquad \text{for } \mathcal D = \mathcal N(0, \sigma^2I).
\end{split}
\end{align}
\end{corollary}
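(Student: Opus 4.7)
The plan is to reduce the corollary to the exact same template used for Theorem \ref{theo:vector_field_certification}, with the parameter vector $\phi$ taking over the role previously played by the discrete grid $p$. Concretely, for fixed $x$ and $p$, define the auxiliary map
\[
h_x(\phi) \;=\; f\bigl(I_T(x,\, p + v_\phi(p))\bigr),
\]
so that Definition \ref{def:smooth_parametric_classifier} rewrites as $\tilde g_\phi(x,p) = \mathbb{E}_{\epsilon\sim\mathcal D}[h_x(\phi+\epsilon)]$. Since $f$ outputs a probability vector, each coordinate $h_x^i$ takes values in $[0,1]$. The smoothing is now, structurally, identical to the one analyzed in the two propositions preceding Theorem \ref{theo:vector_field} — the only difference being that the smoothing variable is $\phi$ rather than $p$.

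First I would repeat the Leibniz-rule computation verbatim on $h_x$: with $\mathcal D = \mathcal U[-\lambda,\lambda]$, differentiating $\tilde g_\phi^i$ coordinate-wise in $\phi_k$ reduces (by the change of variables $t = \phi_k + \epsilon_k$ in the inner integral) to the boundary difference $h_x^i(\phi+\lambda e_k,\cdot) - h_x^i(\phi-\lambda e_k,\cdot)$, which is bounded in absolute value by $1$. Hence $|\partial \tilde g_\phi^i/\partial \phi_k| \leq 1/(2\lambda)$ for every $k$, so $\|\nabla_\phi \tilde g_\phi^i\|_\infty \leq 1/(2\lambda)$. Since the dual of $\|\cdot\|_\infty$ is $\|\cdot\|_1$, the proposition at the top of the appendix yields that $\tilde g_\phi^i$ is $1/(2\lambda)$-Lipschitz in $\phi$ under $\|\cdot\|_1$.

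Next, for $\mathcal D = \mathcal N(0,\sigma^2 I)$, I would reuse the Stein/likelihood-ratio argument from the Gaussian proposition: for any unit direction $u$ and $r = \tilde g_\phi^i(x,p)$ one has $\sigma u^\top \nabla_\phi \tilde g_\phi^i = \mathbb{E}_{v\sim\mathcal N(0,\sigma^2 I)}[h_x^i(\phi+v)\, u^\top v/\sigma]$, and the constrained maximizer among $[0,1]$-valued functions with prescribed expectation $r$ is exactly the half-space indicator $\mathbbm 1\{u^\top v/\sigma \geq -\Phi^{-1}(r)\}$. Evaluating this yields the bound $\sigma u^\top \nabla_\phi \tilde g_\phi^i \leq \tfrac{1}{\sqrt{2\pi}}\exp(-\tfrac12(\Phi^{-1}(r))^2)$, which is precisely the condition ensuring that $\Phi^{-1}\circ \tilde g_\phi^i$ is $1/\sigma$-Lipschitz under $\|\cdot\|_2$.

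Finally, I would invoke Theorem \ref{theo:smoothness-certificate} with $L = 1/(2\lambda)$ applied to $\tilde g_\phi^i$ under $\|\cdot\|_1$ (uniform case) and with $L = 1/\sigma$ applied to $\Phi^{-1}(\tilde g_\phi^i)$ under $\|\cdot\|_2$ (Gaussian case). The monotonicity of $\Phi^{-1}$ lets us commute the $\max_{i\neq c_A}$ with $\Phi^{-1}$, exactly as in the proof of Theorem \ref{theo:vector_field}, giving the stated radii in $\xi$. The main technical point worth double-checking is that none of the Lipschitz bounds depend on the specific structure of the parametrization $v_\phi$ — they only use that $h_x^i$ is $[0,1]$-valued — so the same certificate applies uniformly across translations, rotations, affine maps, and DCT-parametrized fields without any case analysis. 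I do not foresee a real obstacle: the entire content of the corollary is that renaming the smoothing variable from $p$ to $\phi$ and passing through the interpolated image preserves all the Lipschitz estimates.
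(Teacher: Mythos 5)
Your proposal is correct and follows essentially the same route as the paper: the paper's own proof of this corollary is the one-line observation that the argument for Theorem \ref{theo:vector_field_certification} goes through verbatim after replacing $\hat g$ with $\tilde g$ and the smoothing variable $p$ with $\phi$, which is exactly the substitution you carry out (and usefully make explicit) via the auxiliary map $h_x(\phi)$ and the observation that the Lipschitz estimates only use that the smoothed function is $[0,1]$-valued.
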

\begin{proof}
 The proof is identical to the proof of Theorem \ref{theo:vector_field_certification} but with replacing $\hat g$ with $\tilde g$, and $p$ with $\phi$. 
\end{proof}

\newpage
We provide below the general intuition behind Theorem \ref{theo:vector_field_certification} and how it is related to Theorem \ref{theo:vector_field} and Corollary \ref{corr:param_deform}.

\begin{itemize}
    \item[\textbf{(i)}] Classifiers that are Lipschitz are certifiable with a closed formula for their certification radius [see Theorem 2 in the Appendix].
    
    \item[\textbf{(ii)}] Randomized smoothing turns any classifier to a Lipschitz classifier with respect to the smoothed parameters (in our case either the vector field deformation, or the parameters of the deformation function). 
    
    \item[\textbf{(iii)}] \textbf{(i)} + \textbf{(ii)} $\implies$ a smooth classifier is certifiable against the perturbations in the smoothed parameters with a closed formula for the certification radius. 
    
    \item[\textbf{(iv)}] The choice of the smoothing distribution (Gaussian or Uniform in our case) only plays a role in deciding the norm of the certificate. Finally, the parameters of the smoothing distribution decides the value of the Lipschitz constant.
\end{itemize}

\section{Few Comments about the DCT Deformations}
Discrete Cosine Transforms (DCT) expresses an input signal as a finite sum of cosines at different frequencies. We consider a single dimensional input $T \in \mathbb{R}^N$ that denotes a transformation of a 1D dimensional signal of size $N$ where the extension to the $2D$ follows immediately.  The DCT of $T$ is given as follows:
    $\bar{T}_k = \sum_{n=0}^{N-1} a_n T_{n} \cos\left(\frac{\pi k}{N}\left(n + \frac{1}{2}\right)\right)$, 
where $k \in \{0, 1, \dots, N-1\}$. Similarly to Discrete Fourier Transform, DCT is a linear operator, therefore it can be expressed in a matrix-vector multiplication as $ \bar{T} = \mathbf{C}T$ where $\mathbf{C} \in \mathbb{R}^{N \times N}$ is the discrete cosine matrix such that $\mathbf{C}_{i,j} = a_i \cos\left(\frac{\pi (i-1) }{2}\left((j-1) + \frac{1}{2}\right)\right)$. Note that the number of basis of this transform is $N$, which is similar to the input dimension. We consider certifying the truncated version of the DCT transformation where we have $\tilde N \ll N$ basis, \ie $a_n=0 ~\forall n \ge \tilde{N}$. That is to say, a DCT certification of radius $r$ as per Section \textcolor{red}{3.2} has the following form $\|\xi\|_2 = \sqrt{\sum_i^{\tilde{N}} a_i^2} \leq r$. The smooth classifier \textsc{DeformRS-Par} is constructed by generating deformations using truncated DCT transformation and hence certifies the parameters of $\tilde{N}$ basis. Throughout our experiments, we set $\tilde{N} = 2$; we refer to $\tilde{N}$ in the main manuscript as $k \times k$ with $k=2$ representing the 2D DCT setting.

\begin{figure}[t]
    \centering
    \begin{tabular}{ll}
    \includegraphics[width=0.5\textwidth]{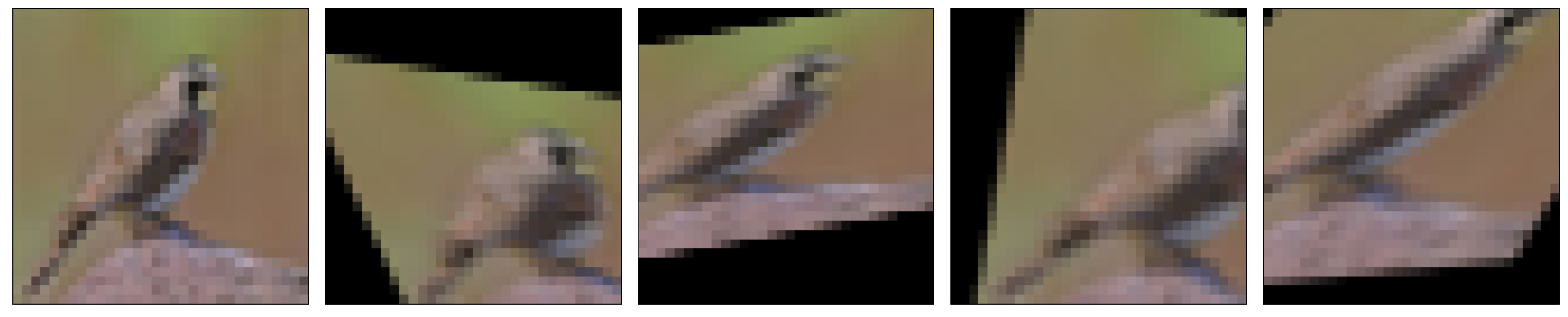}
    \includegraphics[width=0.5\textwidth]{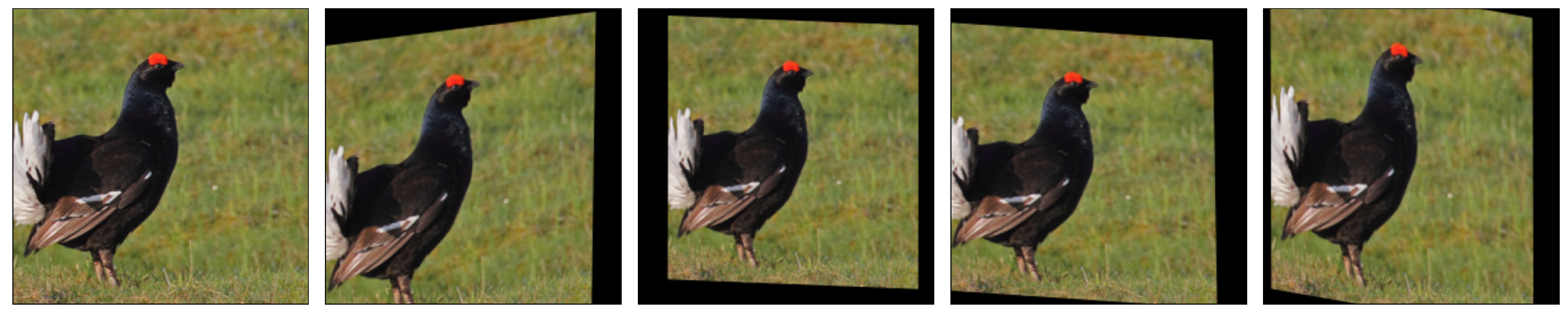}\\
    \includegraphics[width=0.5\textwidth]{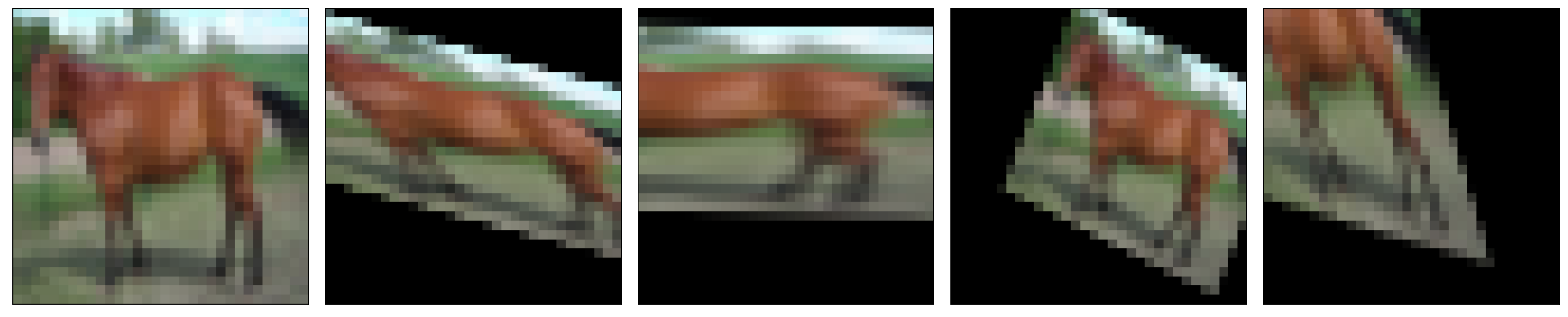}
    \includegraphics[width=0.5\textwidth]{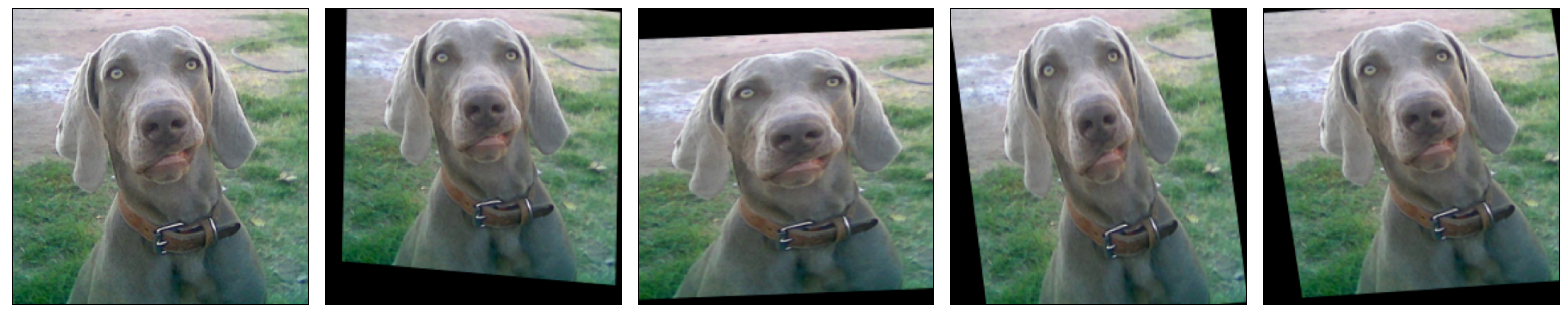}
    \end{tabular}
    \caption{\textbf{Certified examples against affine deformations.} The Figure shows 4 ImageNet images and several examples of affine deformations that are certifiable, i.e. \textsc{DeformRS-Par} produces a correct prediction under all the deformed examples.}
    \label{fig:appendix-affine}
\end{figure}

\section{Limitations, Broader Impact, and Compute Power}
\paragraph{Limitations.} The main limitation of this work, and any certification work, is the running time to compute the prediction of the classifier, and the certified radius around a given input point. However and since we deployed randomized smoothing, this affordable extra computation comes with the benefit of providing a network that is not only accurate, but certifyable robust against a variety of input deformations. Moreover, as stated earlier, the nature of threat model of consideration here is a threat model that can only alter parameters of a certain deformation class. This is similar in spirit to \cite{cvpr2002verification} While this is a more relaxed version of certification compared to methods that certify whether the perturbed inputs under a specific deformation produce similar predictions, this class of certification is of significant interest in the presence of simulators and generative models where adversaries have access to parameters of the deformation model.

\paragraph{Broader Impact.} Deep Neural Networks (DNNs) have achieved state-of-the-art results in a variety of computer vision tasks. However, this impressive performance was shown to be brittle against imperceptible input variations.  These variations could be not only input perturbations (varying pixel intensities) but also geometric transformations (\eg rotating the input image). Since these variations are likely to happen in real world scenarios, as the camera might experience some accidental movements, the deployment of DNNs in safety-critical applications (\eg self driving cars) is limited. This work takes a step into solving this issue by building classifiers that are certifiably robust against a variety of input deformations and hence increase the reliability of DNNs.

\paragraph{Compute Power.} In all of our training experiments, we used a single NVIDIA 1080-TI for CIFAR10 and MNIST experiments while we used 2 NVIDIA V100 to fine tune ImageNet models. For the certification experiments, we use a single gpu per experiments (  NVIDIA 1080-TI for CIFAR10 and MNIST and NVIDIA V100 for ImageNet).

\section{Visualizations}
We provide more samples that are within the certifiable radius for affine, DCT, and VF deformations. For affine, we show in Figure \ref{fig:appendix-affine} 2 examples from CIFAR10 (left) and ImageNet (right). For DCT deformation, we show in Figure \ref{fig:appendix-dct} 4 examples from both MNIST (right) and CIFAR19 (left). Note that all samples are classified correctly, as the norm of their DCT parameters is less than the certified radius. Moreover, it is to observe that DCT deformations provide semantically meaningfull deformations presented as ripples and stretches. Last, we provide 8 examples of certifiable full VF deformations in Figure \ref{fig:appendix-vf}. Note that since the certified norm is small, some of these deformations are imperceptible (specially on MNIST). We also noticed that some samples in CIFAR10 obtained large certified radius, hence could resist perceptible deformations as shown in the last row. We believe that this is due to the fact that rgb images are easier to discriminate not only based on texture, but also based on colors.

% \section{Code}

% We will release the full code that reproduces all of our results after the reviewing period. Nevertheless, we attached the main component of our code in \textit{deform-wrapper.py} that is written in PyTorch \cite{pytorch}. This class takes a classifier and an augmentation method along with its smoothing parameter. The forward pass of this class is a forward of a deformed sample to the base classifier. We provide, nonetheless, a snippet of it here in the appendix that corresponds to translation and rotation deformations. During certification, we use the approach of Cohen \cite{cohen2019certified} in their official released code in \texttt{https://github.com/locuslab/smoothing/tree/master/code}. 

% \lstinputlisting[language=Python,numbers=none]{code.py}

\section{Qualitative and Time Comparisons}

\textbf{Regarding the qualitative comparison,} we show examples in Figure \ref{fig:qualititave} of input samples that can be certified by DeformRS while are not certifiable and mispredicted by the models of LI \cite{li2020provable}.

\noindent \textbf{Regarding the time comparison,} we show in Table \ref{tb:time} the certification run time in seconds on CIFAR10 over all considered deformations. Moreover, we report the run time of LI. We compare the certification run time on 100 samples that are correctly classified by both DeformRS and LI. Our approach significantly outperforms LI in certification run time.

\begin{table}[h]
\small
    \centering
    \caption{We report certification time comparing our proposed DeformRS against LI \cite{li2020provable} on several classes of defomrations, i.e. translation (T), Rotation (R), Scaling (S), DCT, Affine and Vector Fields (VF).}
\begin{tabular}{cc|c|c|c|c|c}
 \toprule
\multicolumn{7}{c}{Certification Run Time in Seconds}
\\
 \midrule
& T& R & S &DCT & Affine &VF\\
 \midrule
  DeformRS & 8.70 & 12.37 & 12.197 &15.97 &9.83 &9.06\\
 \hline
  \cite{li2020provable} & 27.20 &461.71   & 942.88 & - & - & -\\
 \bottomrule
 \label{tb:time}
\end{tabular}
\vspace{-0.5cm}
\end{table}

\begin{figure}[h]
    \centering
    \includegraphics[width=0.75\textwidth]{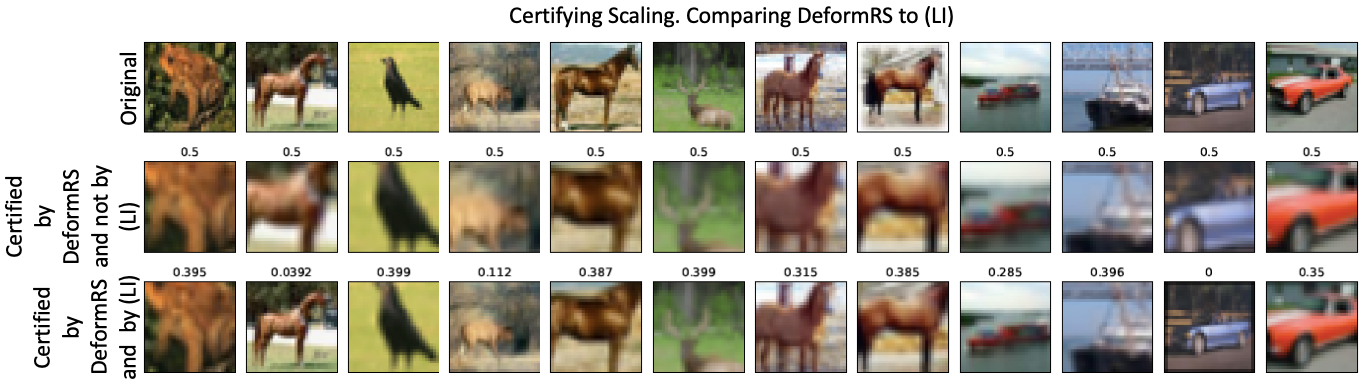}
    \caption{\textbf{Qualitative comparison.} Here we show examples that DeformRS can certify while fooling LI.}
    \label{fig:qualititave}
\end{figure}

\begin{figure}[H]
    \centering
    \begin{tabular}{ll}
    \includegraphics[width=0.35\textwidth]{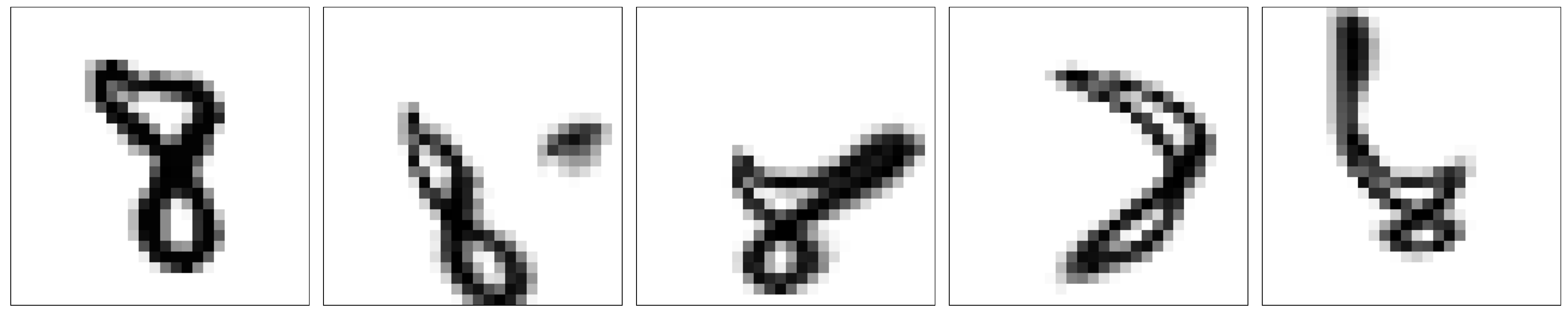}
    \includegraphics[width=0.35\textwidth]{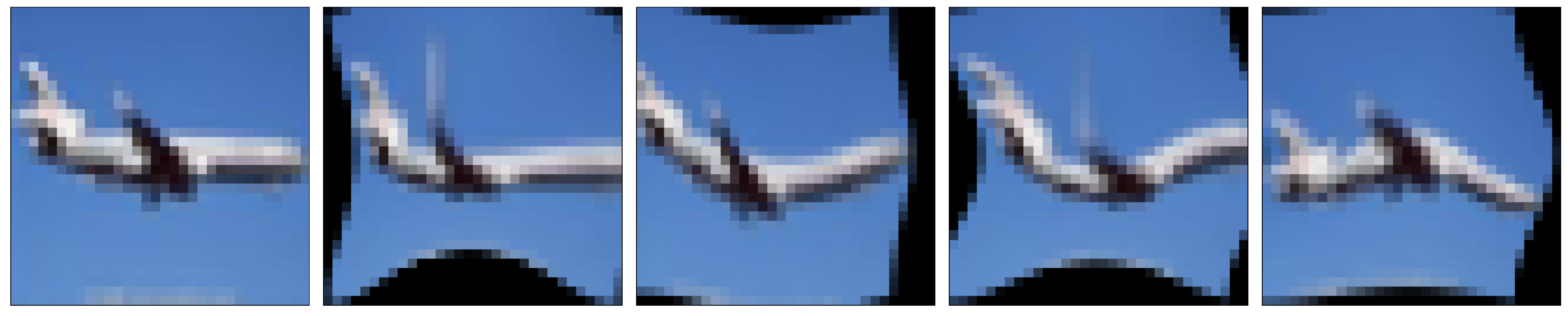} \\
    \includegraphics[width=0.35\textwidth]{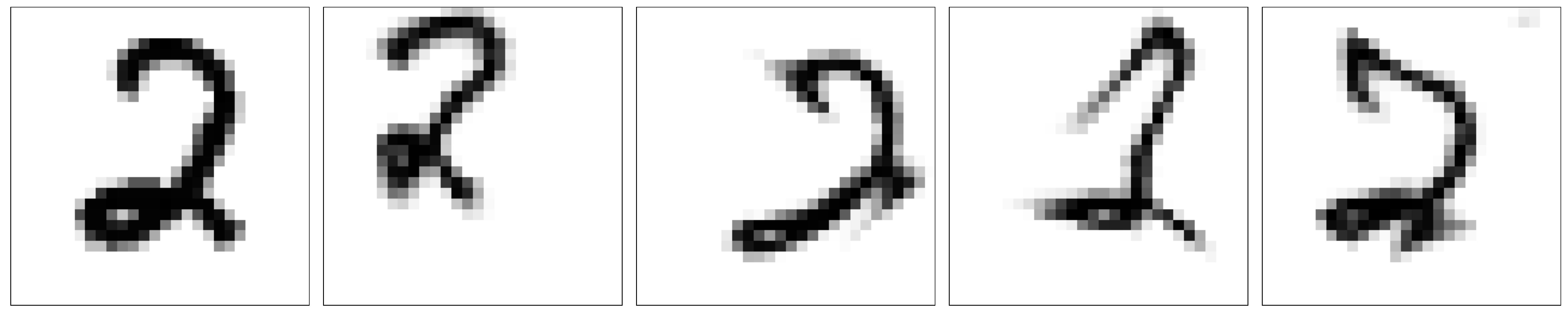}
    \includegraphics[width=0.35\textwidth]{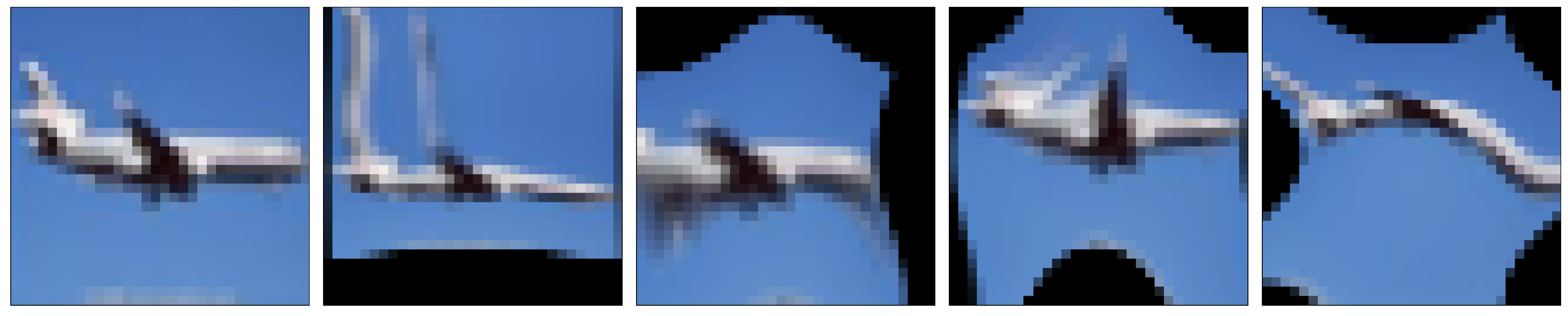} \\
    \includegraphics[width=0.35\textwidth]{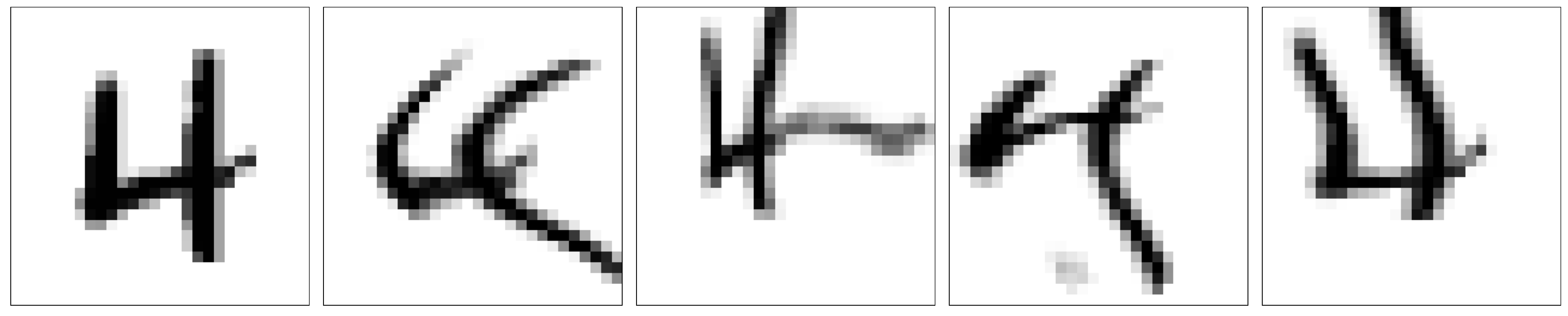}
    \includegraphics[width=0.35\textwidth]{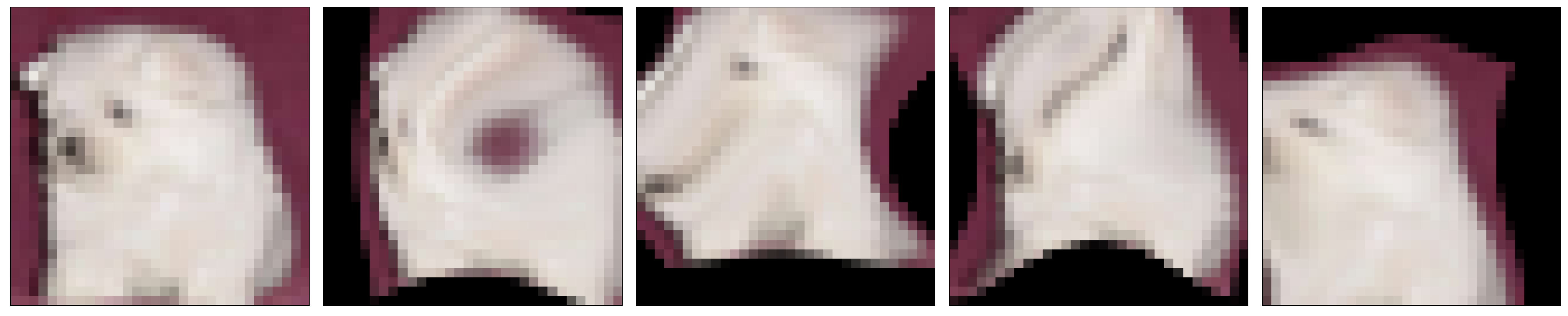} \\
    \includegraphics[width=0.35\textwidth]{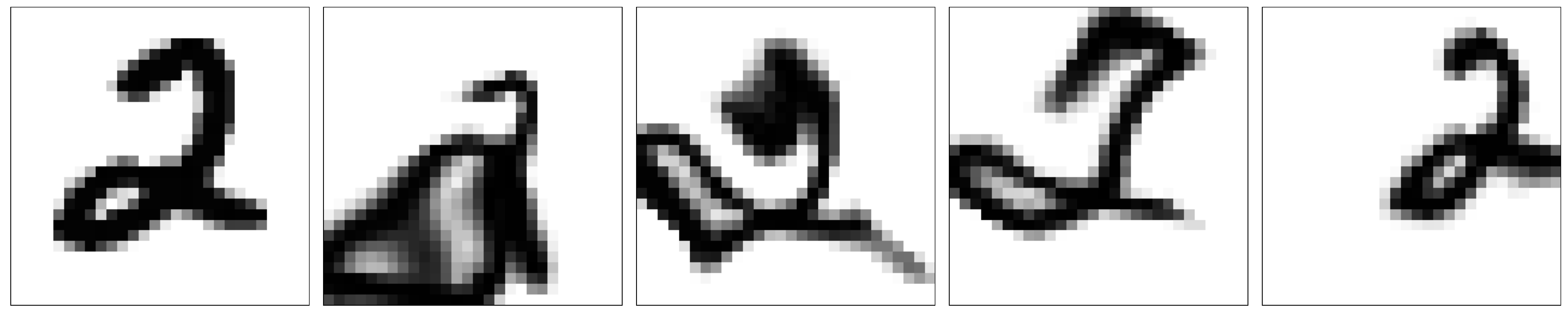}
    \includegraphics[width=0.35\textwidth]{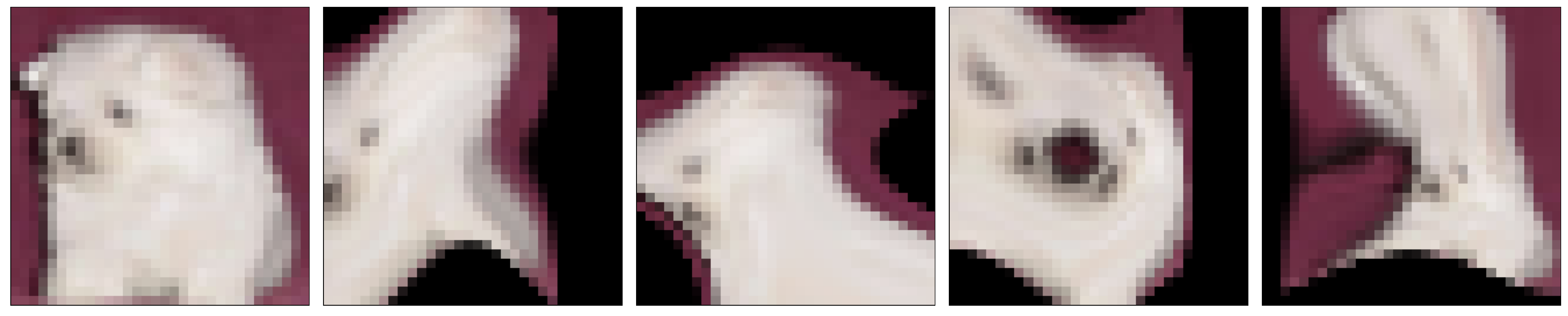}
    \end{tabular}
    \caption{\textbf{Certified examples against DCT deformations.} Figure shows 2 MNIST and 2 CIFAR10 images and several examples of DCT deformations that are certifiable, i.e. \textsc{DeformRS-Par} produces a correct prediction under all the deformed examples.}
    \label{fig:appendix-dct}
\end{figure}

\begin{figure}[t]
    \centering
    \begin{tabular}{ll}
    \includegraphics[width=0.35\textwidth]{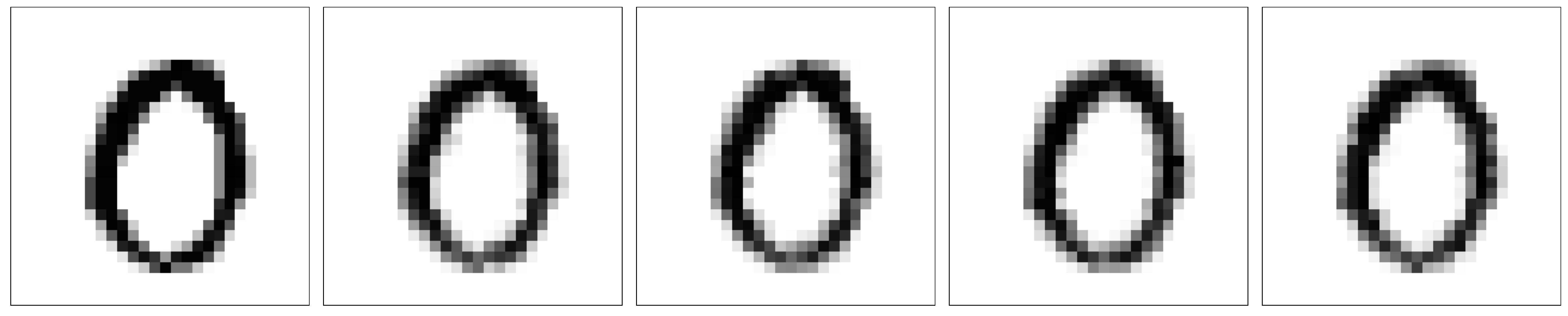}
    \includegraphics[width=0.35\textwidth]{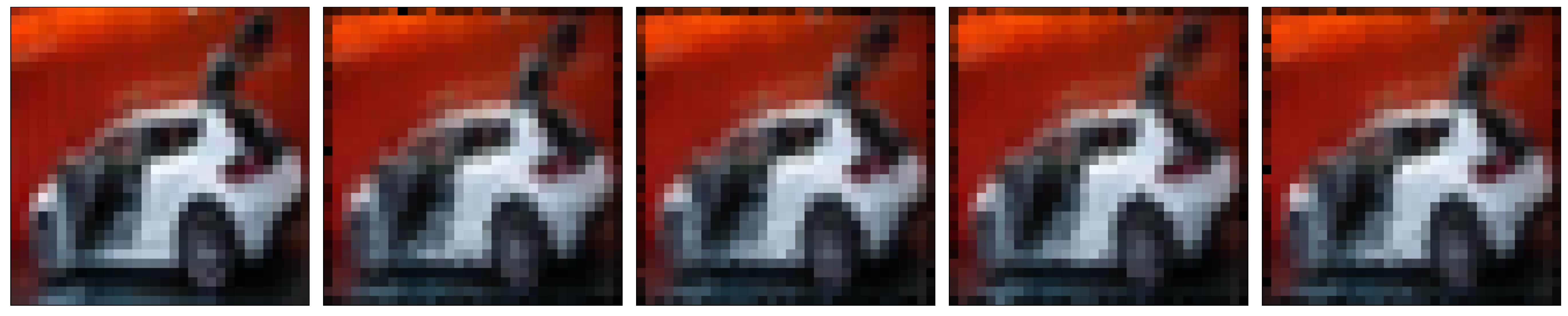} \\
    \includegraphics[width=0.35\textwidth]{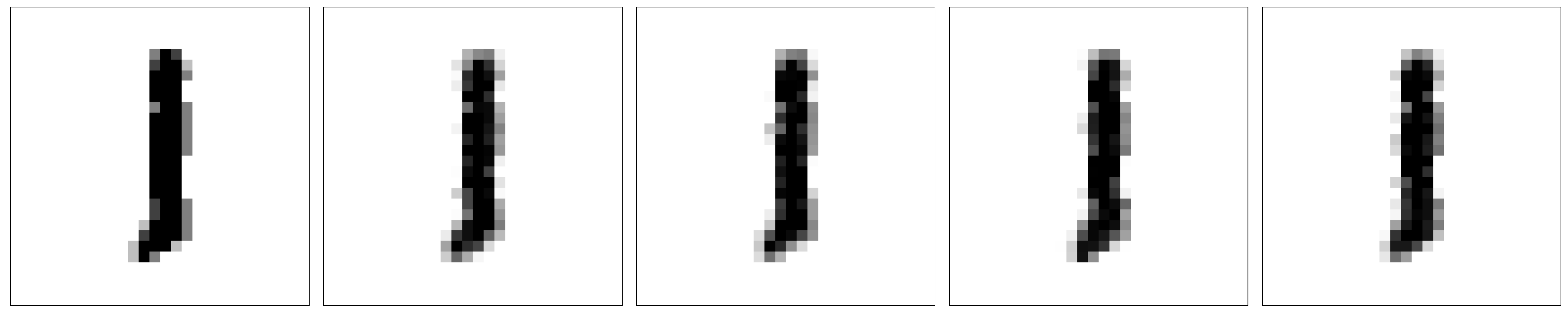}
    \includegraphics[width=0.35\textwidth]{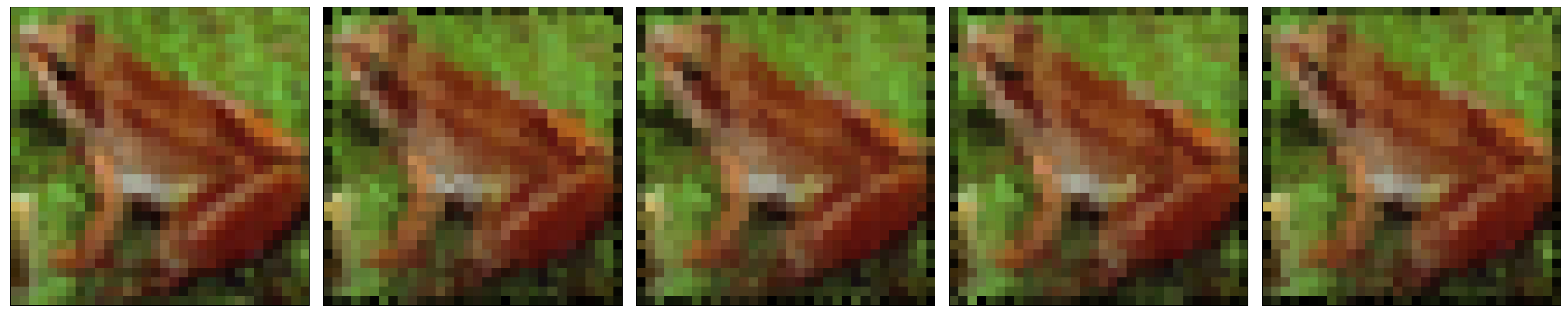} \\
    \includegraphics[width=0.35\textwidth]{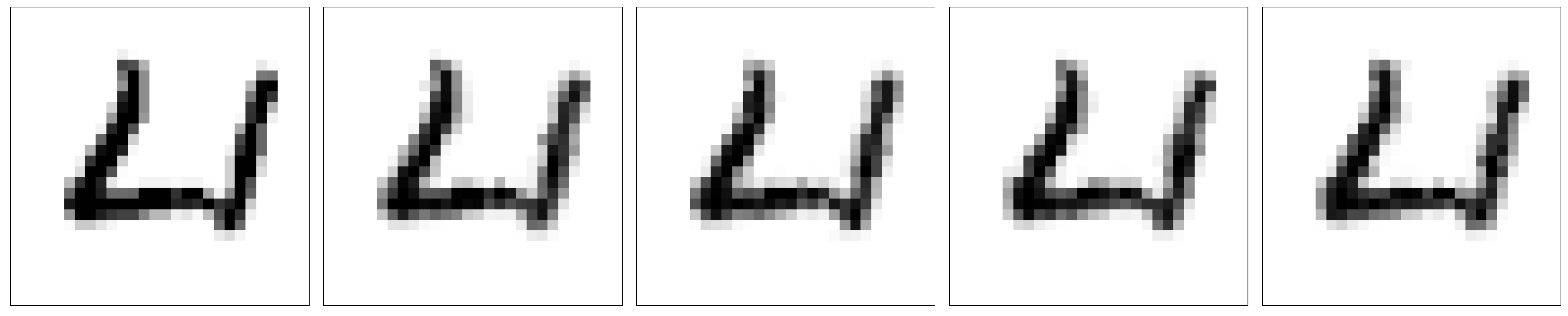}
    \includegraphics[width=0.35\textwidth]{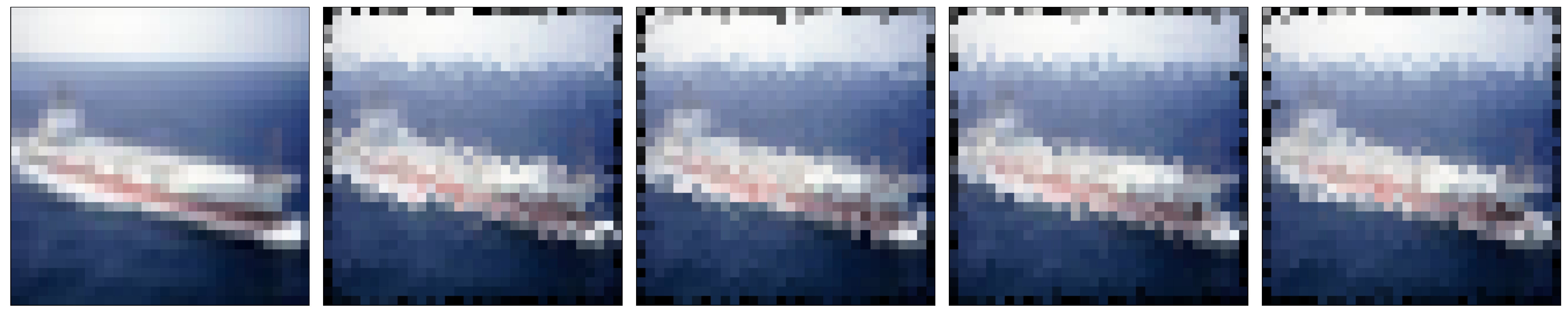}\\
    \includegraphics[width=0.35\textwidth]{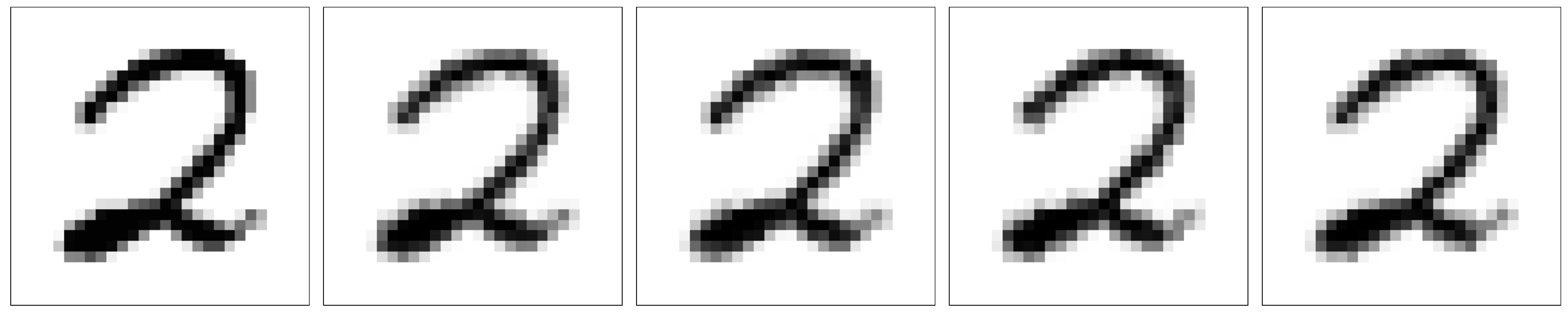}
   \includegraphics[width=0.35\textwidth]{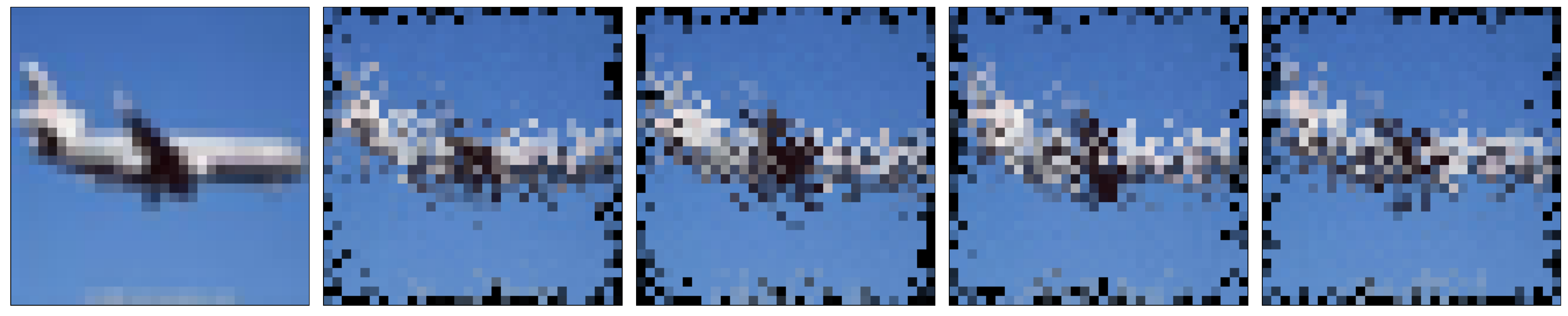}
    \end{tabular}
    \caption{\textbf{Certified examples against vector field deformations.} The Figure shows 2 MNIST and 2 CIFAR10 images along with several vector field deformed images that are certifiable, i.e. \textsc{DeformRS-Par} produces a correct prediction under all the deformed examples.}
    \label{fig:appendix-vf}
\end{figure}

\section{Ablations}
We provide here an ablation study to the effect of the parameters of the smoothing distribution to the performance of our smooth classifier. Not that as expected, the larger the smoothing parameters (\ie $\lambda$ and $\sigma$), the certified classifier has smaller certified accuracy with small radii, but more samples are certified with larger radii. This trade off is demonstrated in all reported tables.

\begin{table*}[htp]
\centering
\scriptsize
\caption{\textbf{Rotation Ablations}. The tables show the certified accuracy on MNIST, CIFAR10 and ImageNet for \textsc{DeformRS-Par} while varying the certified rotation radius, \ie $|\theta| \leq r$, for different networks trained with different $\lambda$.}
% \centering
\begin{tabular}{c| cccccccccc c}
\toprule 
  MNIST  &  {0\degree} &  {10\degree} &  {30\degree} &  {45\degree} &  {60\degree} &  {90\degree} &  {120\degree} &  {135\degree} &  {150\degree}&  {180\degree} &   {\text{ACR}\degree}\\
\midrule
$\lambda = \nicefrac{1\pi}{10}$  &98.81&98.06&0.0&0.0&0.0&0.0&0.0&0.0&0.0&0.0&17.64\\
$\lambda = \nicefrac{2\pi}{10}$  & 98.86&98.37&96.49&0.0&0.0&0.0&0.0&0.0&0.0&0.0&35.10 \\
$\lambda = \nicefrac{3\pi}{10}$   &98.28&97.91&96.85&95.14&0.0&0.0&0.0&0.0&0.0&0.0&52.14
\\
$\lambda = \nicefrac{4\pi}{10}$   &96.59&95.92&94.33&92.61&90.06&0.0&0.0&0.0&0.0&0.0&67.15
 \\
$\lambda = \nicefrac{5\pi}{10}$   & 97.36&97.09&96.28&95.48&94.05&0.0&0.0&0.0&0.0&0.0&85.04
\\
$\lambda = \nicefrac{6\pi}{10}$  &95.55&95.21&94.09&92.95&91.75&87.70&0.0&0.0&0.0&0.0&98.48
  \\
$\lambda = \pi $ & 94.55&94.37&93.85&93.18&92.54&91.31&89.62&88.66&87.52&0.0&163.12\\
\midrule
\midrule
CIFAR10  &  {0\degree} &  {10\degree} &  {30\degree} &  {45\degree} &  {60\degree} &  {90\degree} &  {120\degree} &  {135\degree} &  {150\degree}&  {180\degree} &   {\text{ACR}\degree}\\
\midrule
$\lambda = \nicefrac{3\pi}{10}$& 93.07&91.82&87.77&82.68&0.0&0.0&0.0&0.0&0.0&0.0 &47.17 \\
$\lambda = \nicefrac{4\pi}{10}$& 92.56&91.46&88.75&85.61&80.66&0.0&0.0&0.0&0.0&0.0& 62.17 \\
$\lambda = \nicefrac{5\pi}{10}$ & 91.49&90.67&88.28&86.24&83.20&0.0&0.0&0.0&0.0&0.0 &76.06 \\
$\lambda = \nicefrac{6\pi}{10}$& 91.21&90.55&88.63&87.12&85.03&77.84&0.0&0.0&0.0&0.0 & 90.70 \\
$\lambda = \nicefrac{7\pi}{10}$& 90.64&90.13&88.21&86.87&85.18&80.93&70.83&0.0&0.0&0.0& 117.78  \\
$\lambda = \nicefrac{8\pi}{10}$& 88.88&88.30&87.18&86.00&84.56&81.47&76.14&70.37&0.0&0.0 & 105.05 \\
$\lambda = \nicefrac{9\pi}{10}$ & 88.11&87.62&86.66&85.81&84.76&82.23&78.97&76.13&71.95&0.0 &132.3   \\
$\lambda = \pi $ & 43.28&40.04&33.79&29.31&25.59&18.57&12.72&10.51&8.10&0.0&36.50\\
\midrule
\midrule
ImageNet  &  {0\degree} &  {10\degree} &  {30\degree} &  {45\degree} &  {60\degree} &  {90\degree} &  {120\degree} &  {135\degree} &  {150\degree}&  {180\degree} &   {\text{ACR}\degree}\\
\midrule
$\lambda = \nicefrac{5\pi}{10}$   & 40.60&39.40&36.80&35.20&34.20&29.00&24.20&21.60&19.80&17.00&73.12
 \\
$\lambda = \nicefrac{6\pi}{10}$  & 40.00&39.00&34.80&32.60&30.00&23.00&0.0&0.0&0.0&0.0&32.73
 \\
$\lambda = \nicefrac{7\pi}{10}$   &38.40&37.20&34.20&32.40&30.40&26.00&15.80&0.0&0.0&0.0&36.36
 \\
$\lambda = \nicefrac{8\pi}{10}$  & 36.80&36.20&33.80&32.40&30.80&27.60&21.40&16.40&0.0&0.0&40.52
 \\
$\lambda = \nicefrac{9\pi}{10}$   & 36.80&36.20&34.00&32.40&31.40&28.40&23.60&20.20&17.00&0.0&45.08
  \\
\bottomrule
\end{tabular}\label{tb:rotation-ablation}
\end{table*}

%%%%%%%%%%%%%%%%%%%%%%%%%%%%%%%%%%%%%%%%%%%%%%%%%%%%%%%%%%%%%%%%%%%%%%%%%%%%%%%%%%%%%%%%%%%%
%%%%%%%%%%%%%%%%%%%%%%%%%%%%%%%%%%%%%%%%%%%%%%% Scaling %%%%%%%%%%%%%%%%%%%%%%%%%%%%%%%%%%%%
%%%%%%%%%%%%%%%%%%%%%%%%%%%%%%%%%%%%%%%%%%%%%%%%%%%%%%%%%%%%%%%%%%%%%%%%%%%%%%%%%%%%%%%%%%%%

\begin{table*}[t]
\centering
\scriptsize
\caption{\textbf{Scaling Ablations}. The tables show the certified accuracy on MNIST, CIFAR10 and ImageNet for \textsc{DeformRS-Par} while varying the certified scaling radius, \ie $|\alpha-1| \leq r$, for different networks trained with different $\lambda$.}
\centering
\begin{tabular}{c| cccccccccc c}
\toprule 
   MNIST  &  {0\%} &  {10\%} &  {20\%} &  {30\%} &  {40\%} &  {50\%} &  {60\%} &  {70\%} &  {80\%}&  {90\%} &   {\text{ACR}}\\
\midrule
$\lambda =  10\%$  &97.94&0.0&0.0&0.0&0.0&0.0&0.0&0.0&0.0&0.0&0.10
 \\
$\lambda =  20\%$  & 97.98&96.76&0.0&0.0&0.0&0.0&0.0&0.0&0.0&0.0&0.19
\\
$\lambda =  30\%$   &97.96&97.35&95.85&0.0&0.0&0.0&0.0&0.0&0.0&0.0&0.29
\\
$\lambda =  40\%$  &92.84&91.63&89.83&87.64&0.0&0.0&0.0&0.0&0.0&0.0&0.36
\\
$\lambda =  50\%$   &99.00&98.88&98.70&98.36&97.46&0.0&0.0&0.0&0.0&0.0&0.49
 \\
$\lambda =  60\%$  & 97.51&97.04&96.07&95.16&93.48&89.87&0.0&0.0&0.0&0.0&0.56
\\
$\lambda =  70\%$   &98.79&98.70&98.41&97.96&97.13&95.65&91.33&0.0&0.0&0.0&0.67
\\
\midrule
\midrule
   CIFAR10  &  {0\%} &  {10\%} &  {20\%} &  {30\%} &  {40\%} &  {50\%} &  {60\%} &  {70\%} &  {80\%}&  {90\%} &   {\text{ACR}}\\
\midrule
$\lambda =  10\%$  & 93.93&0.0&0.0&0.0&0.0&0.0&0.0&0.0&0.0&0.0&0.09\\
$\lambda =  20\%$  &93.88&91.57&0.0&0.0&0.0&0.0&0.0&0.0&0.0&0.0&0.18
\\
$\lambda =  30\%$   & 94.10&92.48&89.48&0.0&0.0&0.0&0.0&0.0&0.0&0.0&0.27
\\
$\lambda =  40\%$  & 93.72&92.38&90.33&86.76&0.0&0.0&0.0&0.0&0.0&0.0&0.36
\\
$\lambda =  50\%$   &93.34&92.10&90.31&87.70&83.29&0.0&0.0&0.0&0.0&0.0&0.44
 \\
$\lambda =  60\%$  &93.53&92.18&90.53&88.32&84.87&78.77&0.0&0.0&0.0&0.0&0.52
\\
$\lambda =  70\%$   & 92.73&91.76&90.25&88.24&85.00&80.30&71.44&0.0&0.0&0.0&0.58
 \\
\midrule
\midrule
   ImageNet  &  {0\%} &  {10\%} &  {20\%} &  {30\%} &  {40\%} &  {50\%} &  {60\%} &  {70\%} &  {80\%}&  {90\%} &   {\text{ACR}}\\
\midrule
$\lambda =  10\%$  & 51.00&0.0&0.0&0.0&0.0&0.0&0.0&0.0&0.0&0.0&0.05
 \\
$\lambda =  20\%$  & 50.60&44.40&0.0&0.0&0.0&0.0&0.0&0.0&0.0&0.0&0.09
\\
$\lambda =  30\%$   & 49.60&45.60&40.00&0.0&0.0&0.0&0.0&0.0&0.0&0.0&0.13
 \\
$\lambda =  40\%$  &48.20&45.20&40.80&34.00&0.0&0.0&0.0&0.0&0.0&0.0&0.16
\\
$\lambda =  50\%$   &46.80&42.80&38.80&33.60&28.00&0.0&0.0&0.0&0.0&0.0&0.18
\\
$\lambda =  60\%$  & 46.00&43.00&40.00&33.40&27.00&20.00&0.0&0.0&0.0&0.0&0.19
 \\
$\lambda =  70\%$   & 45.00&42.20&39.00&32.80&25.40&20.60&11.20&0.0&0.0&0.0&0.20
\\
\bottomrule
\end{tabular}\label{tb:scaling-ablation}
\end{table*}
%%%%%%%%%%%%%%%%%%%%%%%%%%%%%%%%%%%%%%%%%%%%%%%%%%%%%%%%%%%%%%%%%%%%%%%%%%%%%%%%%%%%%%%%%%%%
%%%%%%%%%%%%%%%%%%%%%%%%%%%%%%%%%%%%%%%%%%%%%%% Translation %%%%%%%%%%%%%%%%%%%%%%%%%%%%%%%%
%%%%%%%%%%%%%%%%%%%%%%%%%%%%%%%%%%%%%%%%%%%%%%%%%%%%%%%%%%%%%%%%%%%%%%%%%%%%%%%%%%%%%%%%%%%%

\begin{table*}[t]
\centering
\scriptsize
\caption{\textbf{Translation Ablations}. The tables show the certified accuracy on MNIST, CIFAR10 and ImageNet for \textsc{DeformRS-Par} while varying the certified translation radius, \ie $\sqrt{t_u^2 + t_v^2} \leq r$, for different networks trained with different $\sigma$.}
\centering
\begin{tabular}{c| cccccccccc c}
\toprule 
   MNIST  &  {2} &  {4} &  {6} &  {8} &  {10} &  {12} &  {14} &  {16} &  {18}&  {20} &   {\text{ACR}}\\
\midrule
$\sigma =  0.1$  & 98.50&97.73&0.0&0.0&0.0&0.0&0.0&0.0&0.0&0.0&5.23
\\
$\sigma =  0.2$  & 98.88&98.57&97.86&95.01&73.95&0.0&0.0&0.0&0.0&0.0&10.01
\\
$\sigma =  0.3$   &98.94&98.71&98.22&96.50&89.66&69.20&31.26&0.0&0.0&0.0&12.74
\\
$\sigma =  0.4$  & 99.05&98.76&98.14&96.37&89.32&72.99&38.37&7.66&0.15&0.0&13.04
\\
$\sigma =  0.5$   & 99.06&98.68&97.82&94.70&84.84&62.77&29.20&5.48&0.19&0.0&12.48
\\
\midrule
\midrule
   CIFAR10  &  {2} &  {4} &  {6} &  {8} &  {10} &  {13} &  {16} &  {19} &  {22}&  {25} &   {\text{ACR}}\\
\midrule
$\sigma =  0.1$  & 89.88&83.75&73.04&0.0&0.0&0.0&0.0&0.0&0.0&0.0&5.25
\\
$\sigma =  0.2$  & 91.90&88.95&84.61&78.67&69.07&0.0&0.0&0.0&0.0&0.0&9.81
\\
$\sigma =  0.3$   & 92.48&90.35&87.21&82.44&75.79&59.52&32.63&0.0&0.0&0.0&12.71
\\
$\sigma =  0.4$  &92.55&90.46&87.65&83.60&77.94&64.79&43.95&22.24&4.28&0.0&14.03
\\
$\sigma =  0.5$   & 92.20&90.37&87.19&82.80&77.79&64.59&45.99&26.59&9.98&1.52&14.39
\\
\midrule
\midrule
   ImageNet  &  {2} &  {4} &  {6} &  {8} &  {10} &  {13} &  {16} &  {19} &  {22}&  {25} &   {\text{ACR}}\\
\midrule
$\sigma =  0.02$  & 46.80&42.20&35.00&31.80&0.0&0.0&0.0&0.0&0.0&0.0&3.48
\\
$\sigma =  0.03$   & 47.80&44.00&40.20&35.60&32.80&0.0&0.0&0.0&0.0&0.0&5.02
\\
$\sigma =  0.04$  & 49.20&45.40&42.20&38.80&35.80&31.80&27.80&0.0&0.0&0.0&6.57
\\
$\sigma =  0.05$   &48.60&46.00&43.20&41.00&38.20&34.80&31.80&28.40&0.0&0.0&8.10
\\
$\sigma =  0.06$  & 49.20&47.60&45.20&43.20&40.40&36.80&34.80&31.40&28.20&24.40&9.70
\\
\bottomrule
\end{tabular}\label{tb:translation-ablation}
\end{table*}

%%%%%%%%%%%%%%%%%%%%%%%%%%%%%%%%%%%%%%%%%%%%%%%%%%%%%%%%%%%%%%%%%%%%%%%%%%%%%%%%%%%%%%%%%%%%
%%%%%%%%%%%%%%%%%%%%%%%%%%%%%%%%%%%%%%%%%%%%%%% Affine %%%%%%%%%%%%%%%%%%%%%%%%%%%%%%%%%%%%%
%%%%%%%%%%%%%%%%%%%%%%%%%%%%%%%%%%%%%%%%%%%%%%%%%%%%%%%%%%%%%%%%%%%%%%%%%%%%%%%%%%%%%%%%%%%%

\begin{table*}[t]
\centering
\scriptsize
\caption{\textbf{Affine Ablations}. The tables show the certified accuracy on MNIST, CIFAR10 and ImageNet for \textsc{DeformRS-Par} while varying the certified affine radius, \ie $\sqrt{a^2+b^2+c^2+d^2+e^2+f^2} \leq r$, for different networks trained with different $\sigma$.}
\centering
\begin{tabular}{c| cccccccccc c}
\toprule 
   MNIST  &  {0.1} &  {0.2} &  {0.3} &  {0.4} &  {0.5} &  {0.6} &  {0.7} &  {0.8} &  {0.9}&  {1.0} &   {\text{ACR}}\\
\midrule
$\sigma =  0.1$  &98.64&97.50&94.67&0.0&0.0&0.0&0.0&0.0&0.0&0.0&0.37

\\
$\sigma =  0.2$  &99.08&98.49&97.31&93.79&80.38&37.58&7.55&0.0&0.0&0.0&0.56

\\
$\sigma =  0.3$   &98.70&97.85&96.15&91.19&76.13&36.39&6.78&0.74&0.0&0.0&0.55

\\
$\sigma =  0.4$  & 98.19&96.88&93.50&82.34&51.44&13.45&0.77&0.0&0.0&0.0&0.49

\\
\midrule
\midrule
  CIFAR10  &  {0.1} &  {0.2} &  {0.3} &  {0.4} &  {0.5} &  {0.6} &  {0.7} &  {0.8} &  {0.9}&  {1.0} &   {\text{ACR}}\\
\midrule
$\sigma =  0.1$  & 89.31&81.23&65.92&0.0&0.0&0.0&0.0&0.0&0.0&0.0&0.30

\\
$\sigma =  0.2$  &90.51&85.50&77.11&64.68&46.46&22.08&2.93&0.0&0.0&0.0&0.44

\\
$\sigma =  0.3$   &88.09&82.81&75.66&65.33&50.84&32.98&15.72&4.59&0.57&0.01&0.46

 \\
$\sigma =  0.4$  & 83.20&77.60&69.53&58.77&45.94&31.07&17.97&8.17&2.87&0.90&0.44

\\
$\sigma =  0.5$   &75.40&69.06&61.15&50.95&39.22&27.88&18.06&9.71&4.83&2.00&0.40

 \\

\midrule
\midrule
   ImageNet  &  {0.02} &  {0.04} &  {0.06} &  {0.08} &  {0.10} &  {0.12} &  {0.14} &  {0.16} &  {0.18}&  {0.20} &   {\text{ACR}}\\
\midrule

$\sigma =  0.03$   & 48.00&43.60&37.20&30.40&26.20&0.0&0.0&0.0&0.0&0.0&0.04
\\
$\sigma =  0.04$  & 49.00&45.20&40.60&35.20&29.60&27.80&24.00&0.0&0.0&0.0&0.06
\\
$\sigma =  0.05$   & 48.60&46.20&42.80&38.40&33.60&30.20&27.40&24.60&19.80&0.0&0.07
\\
$\sigma =  0.06$  &49.00&45.80&44.20&40.60&36.60&32.80&29.20&27.80&24.00&21.20&0.08
\\
\bottomrule
\end{tabular}\label{tb:affine-ablation}
\end{table*}

%%%%%%%%%%%%%%%%%%%%%%%%%%%%%%%%%%%%%%%%%%%%%%%%%%%%%%%%%%%%%%%%%%%%%%%%%%%%%%%%%%%%%%%%%%%%
%%%%%%%%%%%%%%%%%%%%%%%%%%%%%%%%%%%%%%%%%%%%%%% DCT %%%%%%%%%%%%%%%%%%%%%%%%%%%%%%%%%%%%%%%%
%%%%%%%%%%%%%%%%%%%%%%%%%%%%%%%%%%%%%%%%%%%%%%%%%%%%%%%%%%%%%%%%%%%%%%%%%%%%%%%%%%%%%%%%%%%%

\begin{table*}
\centering
\scriptsize
\caption{\textbf{DCT Ablations}. The tables show the certified accuracy on MNIST and CIFAR10 for \textsc{DeformRS-Par} while varying the certified DCT radius, \ie $\|\xi\|_2 \leq r$, for different networks trained with different $\sigma$.}
\centering
\begin{tabular}{c| cccccccccc c}
\toprule 
   MNIST  &  {0.1} &  {0.2} &  {0.3} &  {0.4} &  {0.5} &  {0.6} &  {0.7} &  {0.8} &  {0.9}&  {1.0} &   {\text{ACR}}\\
\midrule
$\sigma =  0.1$  & 97.58&91.66&61.55&0.0&0.0&0.0&0.0&0.0&0.0&0.0&0.30
\\
$\sigma =  0.2$  & 96.20&89.89&76.07&49.21&18.83&0.61&0.0&0.0&0.0&0.0&0.38
\\
$\sigma =  0.3$   &92.40&82.18&64.55&44.34&22.52&6.73&0.63&0.0&0.0&0.0&0.36
 \\
$\sigma =  0.4$  &86.93&72.50&51.05&31.33&15.07&6.08&1.99&0.12&0.0&0.0&0.31
\\
$\sigma =  0.5$   & 69.10&54.58&37.08&20.86&12.05&6.36&3.68&1.06&0.04&0.0&0.25
\\
$\sigma =  0.6$   &49.16&35.42&21.58&12.23&7.31&5.09&2.97&0.91&0.03&0.0&0.17
\\
\midrule
\midrule
   CIFAR10  &  {0.1} &  {0.2} &  {0.3} &  {0.4} &  {0.5} &  {0.6} &  {0.7} &  {0.8} &  {0.9}&  {1.0} &   {\text{ACR}}\\
\midrule
$\sigma =  0.1$  & 83.23&66.45&39.66&0.0&0.0&0.0&0.0&0.0&0.0&0.0&0.24
\\
$\sigma =  0.2$  & 79.42&69.47&56.33&40.48&25.17&12.21&3.03&0.0&0.0&0.0&0.33
\\
$\sigma =  0.3$   & 74.47&66.86&57.66&47.74&37.21&27.14&17.73&10.15&4.48&1.15&0.39
\\
$\sigma =  0.4$  &69.92&63.95&56.94&49.68&42.21&34.36&26.43&19.20&12.64&7.24&0.43
\\
$\sigma =  0.5$   & 66.75&61.49&56.07&50.22&44.11&37.86&31.08&24.68&18.53&13.03&0.46
\\
$\sigma =  0.6$   & 62.78&58.25&53.78&48.99&43.72&38.35&32.76&27.06&21.62&16.33&0.47
\\

\bottomrule
\end{tabular}\label{tb:dct-ablation}
\end{table*}

%%%%%%%%%%%%%%%%%%%%%%%%%%%%%%%%%%%%%%%%%%%%%%%%%%%%%%%%%%%%%%%%%%%%%%%%%%%%%%%%%%%%%%%%%%%%
%%%%%%%%%%%%%%%%%%%%%%%%%%%%%%%%%%%%%%%%%%%%%%% VF %%%%%%%%%%%%%%%%%%%%%%%%%%%%%%%%%%%%%%%%
%%%%%%%%%%%%%%%%%%%%%%%%%%%%%%%%%%%%%%%%%%%%%%%%%%%%%%%%%%%%%%%%%%%%%%%%%%%%%%%%%%%%%%%%%%%%

\begin{table*}[t]
\centering
\scriptsize
\caption{\textbf{Vector Field Ablations}. The tables show the certified accuracy on MNIST and CIFAR10 for \textsc{DeformRS-Par} while varying the certified vector field radius, \ie $\|\psi\|_2 \leq r$, for different networks trained with different $\sigma$.}
\centering
\begin{tabular}{c| cccccccccc c}
\toprule 
   MNIST  &  {1.0} &  {2.0} &  {3.0} &  {4.0} &  {5.0} &  {6.0} &  {7.0} &  {8.0} &  {9.0}&  {10} &   {\text{ACR}}\\
\midrule
$\sigma =  0.1$  & 95.41&90.18&79.55&60.31&31.02&0.0&0.0&0.0&0.0&0.0&4.01
\\
$\sigma =  0.2$  &79.26&69.04&55.2&40.37&26.22&15.39&7.99&3.90&1.45&0.28&3.42
\\
$\sigma =  0.3$   & 23.04&16.96&12.29&9.00&6.79&5.40&4.21&3.26&2.49&1.96&1.05
\\
$\sigma =  0.4$  & 15.75&11.74&8.52&5.80&4.12&2.93&2.13&1.40&0.87&0.47&0.64
\\
$\sigma =  0.5$   & 12.96&10.54&8.47&6.79&5.27&4.02&3.03&2.38&1.91&1.48&0.70
\\
$\sigma =  0.6$  & 13.02&11.12&9.51&8.05&6.70&5.65&4.79&3.93&3.16&2.63&0.84
\\
$\sigma =  0.7$   & 11.23&9.96&8.71&7.91&6.86&6.17&5.56&4.79&4.19&3.70&0.92
\\
$\sigma =  0.8$  & 10.85&9.93&9.05&8.42&7.81&7.24&6.62&5.90&5.39&4.92&1.09
\\
$\sigma =  0.9$   & 11.45&10.46&9.62&8.90&8.20&7.50&6.84&6.30&5.74&5.16&1.26
 \\

\midrule
\midrule
   CIFAR10  &  {1.0} &  {2.0} &  {3.0} &  {4.0} &  {5.0} &  {6.0} &  {7.0} &  {8.0} &  {9.0}&  {10} &   {\text{ACR}}\\
\midrule
$\sigma =  0.1$  & 75.83&69.48&63.23&55.67&46.90&32.10&0.0&0.0&0.0&0.0&3.73
\\
$\sigma =  0.2$  & 66.77&63.01&59.40&55.38&51.13&46.98&42.63&37.79&33.47&29.29&5.58
\\
$\sigma =  0.3$   & 61.84&59.09&56.13&53.03&50.02&47.14&44.50&41.56&38.68&35.55&6.98
\\
$\sigma =  0.4$  & 58.56&56.41&54.12&51.67&49.59&47.27&44.98&42.84&40.64&38.36&8.28
\\
$\sigma =  0.5$   & 56.18&54.45&52.36&50.78&49.02&47.23&45.36&43.61&42.11&40.58&9.67
\\
$\sigma =  0.6$  & 54.39&53.07&51.59&50.24&48.85&47.24&45.59&44.06&42.40&40.96&10.96
\\
$\sigma =  0.7$   & 53.37&52.24&51.00&49.73&48.39&47.29&46.07&44.83&43.73&42.35&12.47
\\
$\sigma =  0.8$  &51.48&50.42&49.29&48.24&47.09&46.01&44.96&43.88&42.83&41.74&13.57
\\
$\sigma =  0.9$   & 50.69&49.69&48.72&47.76&46.80&45.70&44.81&43.66&42.83&41.83&14.61\\
\bottomrule
\end{tabular}\label{tb:vf-ablation}
\end{table*}

\end{document}